\documentclass[twoside]{article}

\usepackage[preprint]{aistats2027}
\usepackage{amsmath,amssymb,amsthm}
\usepackage{booktabs}
\usepackage{array}
\usepackage{graphicx}
\usepackage{microtype}
\usepackage[round]{natbib}
\usepackage{url}

\graphicspath{{figures/}}

\newtheorem{lemma}{Lemma}
\newtheorem{theorem}{Theorem}
\newtheorem{proposition}{Proposition}
\newtheorem{corollary}{Corollary}
\newtheorem{remark}{Remark}

\newcommand{\Rb}{\mathbb{R}}
\newcommand{\Eb}{\mathbb{E}}
\newcommand{\NPDF}{\mathcal{N}}
\newcommand{\bH}{\mathbf{H}}
\newcommand{\bG}{\mathbf{G}}
\newcommand{\bS}{\mathbf{S}}
\newcommand{\bK}{\mathbf{K}}
\newcommand{\bPhi}{\mathbf{\Phi}}
\newcommand{\bSig}{\mathbf{\Sigma}}
\newcommand{\Gt}{\widetilde{\bG}}
\newcommand{\yt}{\widetilde{y}}
\newcommand{\Rsp}{\mathcal{R}}
\newcommand{\dsr}{d_{\mathrm{search}}}

\providecommand{\const}[1]{#1}  

\newcommand{\vBudFullDelta}{+0.54}
\newcommand{\vBudFullP}{0.97}

\newcommand{\vBudHi}{96}
\newcommand{\vBudLo}{6}
\newcommand{\vBudRatio}{16}

\newcommand{\vBudRowDelta}{-0.13}
\newcommand{\vBudRowP}{0.97}

\newcommand{\vFSGrowthConfigs}{54}
\newcommand{\vFSGrowthSlack}{-3.3e-16}

\newcommand{\vFSNLargeN}{25600}
\newcommand{\vFSNSmallN}{400}
\newcommand{\vFSNoClipCells}{536}
\newcommand{\vFSNoClipFailP}{64}
\newcommand{\vFSNoClipFull}{1.99}

\newcommand{\vFSNoClipRow}{0.79}

\newcommand{\vFSNoClipRowMed}{0.49}

\newcommand{\vFSSlopeFull}{-0.71}
\newcommand{\vFSSlopeFullHi}{-0.63}
\newcommand{\vFSSlopeFullInP}{+1.03}
\newcommand{\vFSSlopeFullInPHi}{+1.19}
\newcommand{\vFSSlopeFullInPLargeN}{+0.73}
\newcommand{\vFSSlopeFullInPLo}{+0.85}
\newcommand{\vFSSlopeFullInPSmallN}{+1.33}
\newcommand{\vFSSlopeFullLo}{-0.78}
\newcommand{\vFSSlopeRow}{-0.50}
\newcommand{\vFSSlopeRowHi}{-0.48}
\newcommand{\vFSSlopeRowInP}{-0.02}
\newcommand{\vFSSlopeRowInPHi}{+0.10}
\newcommand{\vFSSlopeRowInPLo}{-0.14}
\newcommand{\vFSSlopeRowLo}{-0.52}

\newcommand{\vLBBandAboveHi}{11\%}
\newcommand{\vLBBandAboveLo}{0\%}
\newcommand{\vLBBandAboveMin}{2.22}
\newcommand{\vLBBandAboveRate}{0\%}
\newcommand{\vLBBandBelowCells}{13}
\newcommand{\vLBBandBelowDraws}{104}
\newcommand{\vLBBandBelowHi}{100\%}
\newcommand{\vLBBandBelowLo}{97\%}
\newcommand{\vLBBandBelowMax}{1.07}
\newcommand{\vLBBandBelowRate}{100\%}
\newcommand{\vLBBandNearCells}{2}

\newcommand{\vLBBandNearHi}{96\%}
\newcommand{\vLBBandNearLo}{54\%}
\newcommand{\vLBBandNearRate}{81\%}

\newcommand{\vLBDepthAllExpM}{+0.41}
\newcommand{\vLBDepthAllExpN}{-0.38}

\newcommand{\vLBDepthCells}{224}
\newcommand{\vLBDepthSeeds}{8}

\newcommand{\vLBFitC}{6.36}
\newcommand{\vLBFitCHi}{6.49}
\newcommand{\vLBFitCLo}{6.23}

\newcommand{\vLBFitCells}{155}
\newcommand{\vLBFitDropped}{69}
\newcommand{\vLBFitExpM}{+0.50}
\newcommand{\vLBFitExpMHi}{+0.51}
\newcommand{\vLBFitExpMLo}{+0.48}
\newcommand{\vLBFitExpN}{-0.44}
\newcommand{\vLBFitExpNHi}{-0.43}
\newcommand{\vLBFitExpNLo}{-0.46}

\newcommand{\vLBFitThresh}{1.0}

\newcommand{\vLBLOneCells}{16}
\newcommand{\vLBLOneFloorGap}{4.4e-16}
\newcommand{\vLBLOneGapOut}{1.81}
\newcommand{\vLBLOneLeak}{4.5e-16}
\newcommand{\vLBLogCorrHi}{-0.42}
\newcommand{\vLBLogCorrLo}{-0.45}
\newcommand{\vLBOutlierMax}{88}
\newcommand{\vLBOutlierP}{256}

\newcommand{\vLabCells}{16}

\newcommand{\vOptCells}{64}

\newcommand{\vOptGapMax}{2.7e-12}
\newcommand{\vOptGapMed}{1.4e-14}

\newcommand{\vProtNstarRestarts}{5}

\newcommand{\vTwoSlopeN}{-0.50}
\newcommand{\vTwoSlopeNHi}{-0.48}
\newcommand{\vTwoSlopeNLo}{-0.52}
\newcommand{\vTwoSlopeP}{-0.02}
\newcommand{\vTwoSlopePHi}{+0.10}
\newcommand{\vTwoSlopePLo}{-0.14}

\newcommand{\vkAnisoCalExp}{+0.203}
\newcommand{\vkAnisoCalExpHi}{+0.283}
\newcommand{\vkAnisoCalExpLo}{+0.115}
\newcommand{\vkAnisoCalRTwo}{0.63}
\newcommand{\vkAnisoCells}{16}

\newcommand{\vkAnisoShiftHi}{1.76}
\newcommand{\vkAnisoShiftLo}{0.95}
\newcommand{\vkAnisoShiftMed}{1.22}
\newcommand{\vkAnisoShiftRFour}{1.09}
\newcommand{\vkAnisoShiftRSixteen}{1.60}

\newcommand{\vkBayesPop}{0.237137}

\newcommand{\vkCritCalHi}{20.5}
\newcommand{\vkCritCalLo}{15.4}
\newcommand{\vkCrossFifteen}{0.60}

\newcommand{\vkCrossTen}{1.20}
\newcommand{\vkCrossTwenty}{0.30}

\newcommand{\vkDepthRsqLogLogM}{0.9569}
\newcommand{\vkDepthRsqLogM}{0.9589}

\newcommand{\vkExpCritFifteen}{0.150}
\newcommand{\vkExpCritTen}{0.142}
\newcommand{\vkExpCritTwenty}{0.145}

\newcommand{\vkFormLinear}{0.8082}
\newcommand{\vkFormLog}{0.8016}

\newcommand{\vkFormPower}{0.8127}

\newcommand{\vkKLPredicted}{0.1563}
\newcommand{\vkKLRatio}{0.1560}
\newcommand{\vkKappaStar}{0.229333}

\newcommand{\vkLawCalCells}{11}
\newcommand{\vkLawCalExp}{0.156}
\newcommand{\vkLawCalHi}{0.174}
\newcommand{\vkLawCalLo}{0.118}
\newcommand{\vkLawCalRsq}{0.965}

\newcommand{\vkLawExpHi}{0.166}
\newcommand{\vkLawExpLo}{0.110}
\newcommand{\vkLawFixExp}{0.150}

\newcommand{\vkLawRsq}{0.964}

\newcommand{\vkNullAniHi}{4.6}
\newcommand{\vkNullAniLo}{3.9}
\newcommand{\vkNullIsoHi}{24.8}
\newcommand{\vkNullIsoLo}{23.0}

\newcommand{\vkNullRatio}{5}

\newcommand{\vkRankCells}{28}
\newcommand{\vkRankExp}{0.325}
\newcommand{\vkRankHi}{0.399}
\newcommand{\vkRankLo}{0.259}
\newcommand{\vkRankPca}{-0.031}

\newcommand{\vkTaskACross}{0.006}

\begin{document}

\runningtitle{Search Dimension in Unlabeled Projection Pursuit}
\runningauthor{Grozavescu, Girolami}
\twocolumn[
\aistatstitle{Search Dimension in Unlabeled Projection Pursuit:\\
A Scaling Law for Subspace Restriction}
\aistatsauthor{Rare\textcommabelow{s} Grozavescu \And Mark Girolami}
\aistatsaddress{University of Cambridge\\ \texttt{rg625@cam.ac.uk}
  \And University of Cambridge}
]


\begin{abstract}
Projection pursuit searches for a direction along which the data look least Gaussian. When the observation space contains a large Gaussian complement, the empirical objective can be minimized by a direction that carries no signal, with empirical kurtosis as low as at the truth. Sample splitting exposes rather than repairs this failure. Appending coordinates independent of the latent regime degrades the search while leaving Bayes recoverability unchanged. Restricting the search to the column space of a known forward operator removes the failure exactly on the negative-kurtosis branch. Estimating a principal subspace from the data is the alternative. In a controlled two-component model, the leading sufficient scalings differ in the gain with which the operator transmits the discriminant: $\varsigma^{-4}$ for covariance-spike estimation and $\varsigma^{-8}$ for fourth-moment search. At fixed search dimension, the measured threshold ratio collapses onto $n/p^2$ with exponent $\vkLawCalExp$, close to the predicted $1/8$. This is an empirically supported scaling motivated by sufficient bounds, not a proved asymptotically tight law. When the search dimension is varied, the measured exponent is $\vkRankExp$, substantially larger than $1/8$, and the tested range does not identify its functional form. The crossing location also depends on calibration and model configuration. Under a downstream excess-error criterion, the scaling largely disappears.
\end{abstract}

\section{Introduction}
\label{sec:intro}

Projection pursuit searches for a direction along which the data depart from a reference distribution. In high dimensions, this creates a basic statistical problem: the search is performed over many directions, most of which may carry no information about the latent structure. An empirical search can therefore find a projection that looks non-Gaussian even when the direction itself is unrelated to the signal.

We study this problem for minimum-kurtosis projection pursuit in a linear--Gaussian inverse model. After whitening, the observations have the form
\[
\yt=\Gt z+\tilde\varepsilon,
\qquad
\tilde\varepsilon\sim\NPDF(0,I_p).
\]
The signal $\Gt z$ lies in the column space
\[
\Rsp=\operatorname{col}(\Gt),
\]
while the orthogonal complement is independent standard Gaussian noise. Thus the observation model identifies a natural search space, but an unrestricted empirical search can exploit the Gaussian complement to obtain a spuriously low sample kurtosis.

Our first result makes this failure exact. When the Gaussian complement has dimension at least $n-1$ for even $n$, the empirical excess kurtosis reaches its universal lower bound $-2$ on an uninformative direction in $\Rsp^\perp$, while the population kurtosis of that direction is zero. Before this exact degeneracy is reached, the complement already contains spurious minima whose depth grows with the searched dimension. The failure is therefore a property of the empirical search rather than a loss of information about the latent regime.

The population problem points to a simple remedy. If the forward operator is known, restricting the search to $\Rsp$ does not change the population optimum on the negative-kurtosis branch. For a direction mixing an informative component with an orthogonal Gaussian component,
\[
\kappa(u)=\kappa_a\lambda(\theta)^2,
\]
so the Gaussian complement can only attenuate the magnitude of the population excess kurtosis. Restricting the search to $\Rsp$ is therefore population-lossless on this branch while removing irrelevant directions from the empirical optimization problem.

When $\Rsp$ is unknown, the problem separates into two tasks: estimating the signal subspace and finding the direction within it.. This creates two separate statistical tasks: discovering the signal subspace and finding the direction inside it. We compare them in a controlled two-component model. If $\varsigma$ is the gain with which the forward operator transmits the latent mean-gap direction, principal-subspace estimation is governed by a covariance spike of order $\varsigma^2$, while the fourth-moment signal in the restricted search is of order $\varsigma^4$ at weak gain. The leading sufficient scalings are
\[
\varsigma_{\mathrm{PCA}}\propto(p/n)^{1/4},
\qquad
\varsigma_{\mathrm{KPP}}\propto(\dsr/n)^{1/8},
\]
which give
\[
\frac{\varsigma_{\mathrm{KPP}}}
     {\varsigma_{\mathrm{PCA}}}
\propto
\left(\frac{n}{p^2}\right)^{1/8}
\]
at fixed search dimension.

We test this comparison rather than treating the sufficient bound as a tight law. At fixed search dimension, the measured exponent is $\vkLawCalExp$ (95\% CI $[\vkLawCalLo,\vkLawCalHi]$), close to $1/8$. When the search dimension is varied, the measured exponent is $\vkRankExp$, substantially larger than $1/8$, and the tested range does not distinguish power-law, logarithmic, and linear forms. The threshold crossing also changes with the angular criterion, covariance-spike ordering, and latent covariance.

The paper has three main contributions:

\begin{itemize}\itemsep2pt

\item We characterize a high-dimensional failure mode of unrestricted minimum-kurtosis projection pursuit, including an exact empirical degeneracy and a general spurious-minimum bound, and show that restriction to the known signal subspace removes the population failure on the negative-kurtosis branch.

\item We derive a finite-sample direction-recovery bound whose dependence on the searched dimension motivates a comparison between exact-subspace search and data-driven subspace estimation.

\item We measure this comparison and its limits. At fixed search dimension the threshold ratio follows the predicted $1/8$ gain exponent, while the observed search-dimension dependence is different. We also show that the crossing depends on calibration and model configuration and that sample splitting cannot repair a search-limited procedure.

\end{itemize}

The comparison is about direction recovery. When thresholds are instead defined through downstream excess error, the scaling largely disappears because prediction error is less sensitive to weak-signal direction differences.

\section{Setup and exact reference}
\label{sec:setup}

We use a linear--Gaussian inverse model in which the latent variable has a finite Gaussian-mixture prior:
\begin{equation}
\begin{gathered}
z \sim
\textstyle\sum_{k=1}^{M}\pi_k\,\NPDF(\mu_k,\bSig_k),
\\
a\mid z\sim\NPDF(\bPhi z,\bK),
\qquad
y=\bH a+\varepsilon,
\end{gathered}
\label{eq:model}
\end{equation}
with $\varepsilon\sim\NPDF(0,\sigma^2I)$.

Writing $w=a-\bPhi z$, setting $\bG=\bH\bPhi$, and marginalizing $w$ gives
\[
y\mid z\sim\NPDF(\bG z,\bS),
\qquad
\bS=\bH\bK\bH^\top+\sigma^2I.
\]

Whitening by $\bS$ gives
\begin{equation}
\begin{gathered}
\yt=\bS^{-1/2}y
=\Gt z+\tilde\varepsilon,
\\
\Gt=\bS^{-1/2}\bG,
\qquad
\tilde\varepsilon\sim\NPDF(0,I_p).
\end{gathered}
\label{eq:whitened}
\end{equation}

We call
\[
\Rsp=\operatorname{col}(\Gt)
\]
the signal subspace and write $r=\dim(\Rsp)$.

\paragraph{Recoverability reference.}
For the Gaussian-mixture model, the class-conditional distribution of $y$ is Gaussian with mean $\bG\mu_k$ and covariance
\[
\bG\bSig_k\bG^\top+\bS.
\]
The corresponding Bayes classifier is
\[
\arg\max_k\;
\pi_k
\NPDF
\left(
y;
\bG\mu_k,
\bG\bSig_k\bG^\top+\bS
\right).
\]
In the experiments this exact classifier is used as a reference. The downstream procedures are evaluated on held-out data and do not use held-out regime labels during fitting. Appendix~\ref{app:info} lists the information available to each method, and Appendix~\ref{app:details} gives the matching and evaluation protocol.

\paragraph{Projection pursuit.}
For two Gaussian components with weights $p,q$, projected separation $\Delta$, and projected variances $v_1,v_2$, the population excess kurtosis is
\begin{equation}
\kappa
=
\frac{pq\,N}
     {(pv_1+qv_2+pq\Delta^2)^2},
\label{eq:kappa}
\end{equation}
where
\[
N=
3(v_1-v_2)^2
+
6\Delta^2(q-p)(v_1-v_2)
+
\Delta^4(1-6pq).
\]
The main experiments use two equally weighted components with isotropic within-component covariance and a mean separation along one latent coordinate.

\section{The operator determines the population search space}
\label{sec:mech}

Let $P$ be the orthogonal projector onto $\Rsp$.

\begin{lemma}[Noise-only complement]
\label{lem:complement}
For the whitened model in \eqref{eq:whitened},
\[
(I-P)\yt=(I-P)\tilde\varepsilon.
\]
The complement is therefore standard Gaussian on $\Rsp^\perp$ and is independent of $z$ and of $P\yt$.
\end{lemma}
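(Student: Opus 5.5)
The proof is a direct computation from the whitened representation \eqref{eq:whitened}, followed by standard Gaussian facts.

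\emph{Step 1: the projection kills the signal.} Since $\Rsp=\operatorname{col}(\Gt)$ and $P$ is the orthogonal projector onto it, $P\Gt=\Gt$, so $(I-P)\Gt=0$. Applying $I-P$ to $\yt=\Gt z+\tilde\varepsilon$ gives
\[
(I-P)\yt=(I-P)\tilde\varepsilon,
\]
which is the identity in the statement.

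\emph{Step 2: the complement is standard Gaussian on $\Rsp^\perp$.} I will use the model structure. Conditional on $z$, the noise $\tilde\varepsilon=\bS^{-1/2}(y-\bG z)$ is $\NPDF(0,I_p)$, because $y\mid z\sim\NPDF(\bG z,\bS)$. This conditional law does not depend on $z$, so $\tilde\varepsilon$ is independent of $z$. Choose an orthonormal basis $U_\perp$ of $\Rsp^\perp$, so that $I-P=U_\perp U_\perp^\top$. Then $U_\perp^\top\tilde\varepsilon\sim\NPDF(0,I_{p-r})$. Equivalently, $(I-P)\tilde\varepsilon\sim\NPDF(0,I-P)$, which is the standard Gaussian on $\Rsp^\perp$.

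\emph{Step 3: independence from $z$ and $P\yt$.} The pair $\bigl(P\tilde\varepsilon,(I-P)\tilde\varepsilon\bigr)$ is jointly Gaussian. Its cross-covariance is $P(I-P)^\top=P-P^2=0$, so the two components are independent. Moreover, $(P\tilde\varepsilon,(I-P)\tilde\varepsilon)$ is a function of $\tilde\varepsilon$, which is independent of $z$. Hence the three variables $z$, $P\tilde\varepsilon$ and $(I-P)\tilde\varepsilon$ are mutually independent. Since $P\yt=\Gt z+P\tilde\varepsilon$ is a measurable function of $(z,P\tilde\varepsilon)$, the complement $(I-P)\yt$ is independent of the pair $(z,P\yt)$.

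The only point that needs care is the independence of $\tilde\varepsilon$ and $z$. It holds because the conditional law of $\tilde\varepsilon$ given $z$ is the same for every $z$. This relies on $\bS$ being a fixed, non-random matrix that does not depend on $z$, which is true here: $\bS=\bH\bK\bH^\top+\sigma^2I$. Everything else in the argument is linear algebra.
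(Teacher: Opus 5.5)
Your proof is correct and follows the paper's argument: $(I-P)$ annihilates $\Gt z$, the complement then has the $\NPDF(0,I-P)$ law, and the vanishing cross-covariance $P(I-P)=0$ of the jointly Gaussian pair gives the independence. The only differences are minor. You derive the independence of $\tilde\varepsilon$ and $z$ from $y\mid z\sim\NPDF(\bG z,\bS)$, whereas the paper assumes it, and you conclude independence from the pair $(z,P\yt)$ directly rather than by conditioning on $z$ and integrating.
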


The lemma identifies the central geometric feature of the problem: directions in $\Rsp^\perp$ contain no latent information at all.

\begin{lemma}[Exact dilution]
\label{lem:dilution}
Let
\[
u=\cos\theta\,a+\sin\theta\,b,
\]
where $a\in\Rsp$ and $b\in\Rsp^\perp$ are unit vectors. If
\[
v=\operatorname{Var}(a^\top\yt)
\]
and $\kappa_a$ is the excess kurtosis of $a^\top\yt$, then
\begin{equation}
\kappa(u)
=
\kappa_a\,\lambda(\theta)^2,
\qquad
\lambda(\theta)
=
\frac{v\cos^2\theta}
     {v\cos^2\theta+\sin^2\theta}.
\label{eq:dilution}
\end{equation}
Thus $\lambda(\theta)\in[0,1]$, with equality to one exactly when $\sin\theta=0$.
\end{lemma}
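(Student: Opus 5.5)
We write $X=a^\top\yt$ and $Y=b^\top\yt$, so that $u^\top\yt=\cos\theta\,X+\sin\theta\,Y$. The plan is to use Lemma~\ref{lem:complement} to separate this into independent pieces and then apply additivity of cumulants. Since $b\in\Rsp^\perp$, we have $b^\top\yt=b^\top(I-P)\yt=b^\top(I-P)\tilde\varepsilon$. By Lemma~\ref{lem:complement}, this is standard Gaussian ($b$ is a unit vector in $\Rsp^\perp$) and independent of $P\yt$. Since $a\in\Rsp$, $X=a^\top P\yt$ is a function of $P\yt$, so $X$ and $Y$ are independent with $Y\sim\NPDF(0,1)$.

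Next we use cumulants. For independent summands, cumulants of every order add, and $\mathrm{cum}_j(cW)=c^j\mathrm{cum}_j(W)$. Because $Y$ is Gaussian, its fourth cumulant is zero. This gives
\[
\mathrm{cum}_4(u^\top\yt)=\cos^4\theta\,\mathrm{cum}_4(X)=\cos^4\theta\,\kappa_a v^2,
\qquad
\operatorname{Var}(u^\top\yt)=v\cos^2\theta+\sin^2\theta .
\]
Excess kurtosis is $\mathrm{cum}_4/\operatorname{Var}^2$, so
\[
\kappa(u)=\kappa_a\frac{v^2\cos^4\theta}{(v\cos^2\theta+\sin^2\theta)^2}=\kappa_a\lambda(\theta)^2,
\]
which is \eqref{eq:dilution}. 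One should note that the fourth cumulant of the centred variable coincides with that of the uncentred one, so any nonzero mean of $X$ from the mixture does not affect the result.

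For the range of $\lambda$, we need $v>0$. This holds because $\operatorname{Var}(X)\ge \operatorname{Var}(a^\top\tilde\varepsilon)=1$, since the noise part of $\yt$ is independent of $z$. So the denominator of $\lambda$ is positive. The numerator is nonnegative and at most the denominator, so $\lambda\in[0,1]$. We have $\lambda=1$ if and only if $\sin^2\theta=0$. If $\sin\theta=0$ then $\cos^2\theta=1$, and $\lambda=1$ follows directly.

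The only step that needs care is justifying independence of $X$ and $Y$ through Lemma~\ref{lem:complement}. This also uses that $\tilde\varepsilon$ is independent of $z$, so that the Gaussian noise in $P\yt$ and the noise in $(I-P)\yt$ are independent. That holds because they are orthogonal projections of an isotropic Gaussian. The rest of the argument is routine.
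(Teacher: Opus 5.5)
Your proof is correct and follows the paper's argument. You get $X=a^\top\yt$ independent of $Y=b^\top\yt\sim\NPDF(0,1)$ from Lemma~\ref{lem:complement} and then do a fourth-order computation. The only cosmetic difference is that you invoke additivity of fourth cumulants, whereas the paper expands $\mu_4(\cos\theta\,X+\sin\theta\,Y)=\cos^4\theta\,\mu_4(X)+6\cos^2\theta\sin^2\theta\,v+3\sin^4\theta$ by hand and regroups it as $\kappa_a v^2\cos^4\theta+3(v\cos^2\theta+\sin^2\theta)^2$, which is exactly that additivity checked directly.
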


The complement therefore attenuates the magnitude of excess kurtosis. When the informative population direction has negative excess kurtosis, adding an orthogonal Gaussian component cannot improve the population objective. Restricting the search to $\Rsp$ is consequently lossless for the population minimization problem on the negative-kurtosis branch.

If the relevant in-subspace kurtosis is positive, the minimum is instead attained in the Gaussian complement. Our direction-recovery experiments are on the negative-kurtosis branch.

The same dilution mechanism extends beyond equal mixture weights; the resulting population amplitude vanishes at specific weight and covariance-separation boundaries. These scope conditions are given in Appendix~\ref{app:population}. The main experiments remain in the negative-kurtosis regime.

\section{Finite-sample search and spurious minima}
\label{sec:theory}

Let $\mathcal S$ be the subspace searched by the empirical procedure, with
\[
d_{\mathrm{search}}
=
\dim(\mathcal S).
\]
The finite-sample question is how uniformly the empirical kurtosis approximates its population value over the searched directions.

\begin{lemma}[Variance floor and sub-Gaussianity]
\label{lem:floor}
For every unit $u$,
\[
\operatorname{Var}(u^\top\yt)
=
\operatorname{Var}(u^\top\Gt z)+1
\ge1,
\]
and
$u^\top(\yt-\Eb\yt)$ is sub-Gaussian with a scale $\sigma^2$ bounded by an absolute constant times
\[
1+\|\Gt\|_{\mathrm{op}}^2
\left[
\max_k\|\bSig_k\|_{\mathrm{op}}
+
\max_k\|\mu_k-\bar\mu\|^2
\right].
\]
\end{lemma}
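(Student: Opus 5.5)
The variance identity comes from the independence structure of the whitened model~\eqref{eq:whitened}, and the sub-Gaussian claim from decomposing the centred projection into Gaussian pieces plus one bounded piece.

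\textbf{Variance floor.} Since $\tilde\varepsilon\sim\NPDF(0,I_p)$ is independent of $z$,
\[
\operatorname{Var}(u^\top\yt)=\operatorname{Var}(u^\top\Gt z)+\operatorname{Var}(u^\top\tilde\varepsilon),
\]
and for unit $u$ the second term equals $\|u\|^2=1$. The first term is nonnegative, which gives the floor.

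\textbf{Decomposition for sub-Gaussianity.} Represent $z$ through its mixture structure. Let $K$ be the component label with $\Pr(K=k)=\pi_k$, and write
\[
z=\mu_K+\bSig_K^{1/2}\xi,
\]
where $\xi\sim\NPDF(0,I)$ is independent of $K$ and of $\tilde\varepsilon$. With $\bar\mu=\sum_k\pi_k\mu_k$ we have $\Eb\yt=\Gt\bar\mu$. Hence
\[
u^\top(\yt-\Eb\yt)=\underbrace{u^\top\Gt(\mu_K-\bar\mu)}_{T_1}+\underbrace{u^\top\Gt\bSig_K^{1/2}\xi}_{T_2}+\underbrace{u^\top\tilde\varepsilon}_{T_3}.
\]
I will bound the moment generating function of each term. $T_3$ is $\NPDF(0,1)$. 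For $T_2$, conditioning on $K$ gives a centred Gaussian with variance at most $\|\Gt\|_{\mathrm{op}}^2\max_k\|\bSig_k\|_{\mathrm{op}}$, so its conditional MGF is bounded uniformly in $K$, and therefore so is its unconditional MGF. $T_1$ is a mean-zero variable bounded in absolute value by $\|\Gt\|_{\mathrm{op}}\max_k\|\mu_k-\bar\mu\|$. By Hoeffding's lemma it is sub-Gaussian with scale of order $\|\Gt\|_{\mathrm{op}}^2\max_k\|\mu_k-\bar\mu\|^2$.

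\textbf{Combining the terms.} The terms are not fully independent: $T_1$ and $T_2$ share $K$. I would handle this in one of two ways. The cleanest is to condition on $K$ and use the tower property:
\[
\Eb e^{\lambda(T_1+T_2+T_3)}=\Eb\!\left[e^{\lambda T_1}\,\Eb\!\left(e^{\lambda T_2}\mid K\right)\right]e^{\lambda^2/2}.
\]
Here the inner conditional expectation is bounded by $\exp\!\bigl(\lambda^2\|\Gt\|_{\mathrm{op}}^2\max_k\|\bSig_k\|_{\mathrm{op}}/2\bigr)$ uniformly in $K$, and Hoeffding then handles $\Eb e^{\lambda T_1}$. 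The alternative is the generic fact that a sum of sub-Gaussian variables, even dependent ones, has $\psi_2$-norm at most the sum of the $\psi_2$-norms, so the squared scale is at most three times the sum of the squared scales.

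Either route gives a variance proxy bounded by an absolute constant times
\[
1+\|\Gt\|_{\mathrm{op}}^2\left[\max_k\|\bSig_k\|_{\mathrm{op}}+\max_k\|\mu_k-\bar\mu\|^2\right].
\]
The only real obstacle is this dependence between the mean term and the within-component term, and the conditioning argument resolves it. Everything else is routine.
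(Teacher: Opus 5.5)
Your proof is correct. The variance floor is argued exactly as in the paper, but for the tail bound you take a somewhat different route.

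The paper does not split off the component means. It keeps $W_u=u^\top\Gt(z-\bar\mu)$ as a single term and notes that, conditionally on component $k$, it is $\NPDF(a_k,b_k)$ with $|a_k|\le A:=\|\Gt\|_{\mathrm{op}}\max_k\|\mu_k-\bar\mu\|$ and $b_k\le B^2:=\|\Gt\|_{\mathrm{op}}^2\max_k\|\bSig_k\|_{\mathrm{op}}$. Averaging the shifted Gaussian tails gives $\Pr(|W_u|\ge t)\le 2\exp\bigl(-(t-A)^2/(2B^2)\bigr)$ for $t>A$. From this it reads off $\|W_u\|_{\psi_2}\le c(A+B)$, and then adds the independent noise by subadditivity of the $\psi_2$ norm, which is essentially your second option.

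You instead separate the mean term $T_1$, use that it is mean zero because $\Eb\mu_K=\bar\mu$, and apply Hoeffding's lemma to it. You absorb its dependence on $T_2$ by conditioning on $K$ inside the moment generating function.

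Your route gives an explicit bound with constant one, $\Eb e^{\lambda u^\top(\yt-\Eb\yt)}\le\exp\bigl(\lambda^2(1+A^2+B^2)/2\bigr)$, and it avoids converting a shifted tail bound into a $\psi_2$ bound. The paper's route is stated directly in the $\psi_2$ norm, which is the form used downstream: the moment bounds $\Eb X_u^{2q}\le C_q\sigma^{2q}$ and the sub-Weibull step for $X_u^q$. Since a mean-zero MGF bound converts to a $\psi_2$ bound up to an absolute constant, nothing later depends on which version you use.
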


The bound does not depend on the ambient dimension $p$, so the sub-Gaussian scale remains fixed across the ambient-dimension sweeps.

\paragraph{Uniform concentration.}
Let
\[
\varphi_T(x)
=
\operatorname{sign}(x)\min(|x|,T)
\]
and
\[
z_i=\yt_i-\bar y_n.
\]
Define
\begin{equation}
\begin{gathered}
\hat\kappa_T(u)
=
\frac{\hat m_{4,T}(u)}
     {\hat m_{2,T}(u)^2}
-3,
\\
\hat m_{q,T}(u)
=
\frac1n\sum_i
\varphi_T(u^\top z_i)^q.
\end{gathered}
\label{eq:trunc}
\end{equation}
The empirical criterion is defined on
\[
\mathcal D_T
=
\left\{
u\in S^{p-1}:
\hat m_{2,T}(u)>0
\right\},
\]
and empirical minimization is understood to be over $\mathcal D_T$.

\begin{theorem}[Uniform concentration over a searched subspace]
\label{thm:conc}
Let $\mathcal S\subseteq\Rb^p$ be a linear subspace with
$\dim(\mathcal S)=d_{\mathrm{search}}$, and let
$U\subseteq\mathcal S\cap S^{p-1}$.

Under the assumptions of Lemma~\ref{lem:floor}, take
\[
T=\sigma\sqrt{8\log n},
\qquad
L_n=d_{\mathrm{search}}\log(3n)+\log(12/\delta).
\]
There are absolute constants $C,C'$ such that if
\[
n\ge
C'\sigma^8(\log n)^3L_n,
\]
then with probability at least $1-\delta$,
\begin{equation}
\begin{aligned}
\sup_{u\in U}
\left|
\hat\kappa_T(u)-\kappa(u)
\right|
&\le
C\sigma^{10}
\Bigg[
(\log n)^{3/2}
\sqrt{\frac{L_n}{n}}
\\
&\qquad\qquad
+
(\log n)^2\frac{L_n}{n}
\Bigg].
\end{aligned}
\label{eq:conc}
\end{equation}
\end{theorem}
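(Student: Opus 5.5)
We prove Theorem~\ref{thm:conc} by the standard three-part route: concentration of truncated moments at a fixed direction, extension to the whole sphere of $\mathcal S$ by a net, and propagation through the ratio $m_4/m_2^2$. Write $x=u^\top(\yt-\Eb\yt)$ and let $m_q(u)=\Eb x^q$ for $q=2,4$. By Lemma~\ref{lem:floor}, $m_2(u)\ge1$ and $x$ is $\sigma$-sub-Gaussian uniformly in $u$.

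\textbf{Step 1: truncation bias.} With $T=\sigma\sqrt{8\log n}$, compare $\Eb\varphi_T(x)^q$ with $m_q(u)$. The difference is bounded by $\Eb[|x|^q\mathbf 1\{|x|>T\}]$. Sub-Gaussian tails make this at most $C\sigma^q T^{q}e^{-T^2/(2\sigma^2)}$, which is $O(\sigma^4 n^{-4}\log^2 n)$ and so negligible against the target rate. The centering by $\bar y_n$ rather than $\Eb\yt$ is handled in the same spirit. On a high-probability event, $\|P_{\mathcal S}(\bar y_n-\Eb\yt)\|\lesssim\sigma\sqrt{L_n/n}$, by sub-Gaussian concentration of the mean over a net of $\mathcal S$. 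Since $\varphi_T$ is $1$-Lipschitz and $|\varphi_T^q|'\le qT^{q-1}$, replacing $\Eb\yt$ by $\bar y_n$ moves each truncated moment by at most $qT^{q-1}\sigma\sqrt{L_n/n}$. This is one source of the $(\log n)^{3/2}$ factor, through $T^3$.

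\textbf{Step 2: fixed-direction concentration and the net.} For fixed $u$, the summands $\varphi_T(x_i)^q$ are bounded by $T^q$ and have variance at most $C\sigma^{2q}$. Bernstein's inequality gives deviations of order
\[
\sigma^q\sqrt{\log(1/\delta')/n}+T^q\log(1/\delta')/n.
\]
Take a $1/n$-net $\mathcal N$ of $\mathcal S\cap S^{p-1}$ of size at most $(3n)^{\dsr}$. A union bound with $\delta'=\delta/(12|\mathcal N|)$ turns $\log(1/\delta')$ into $L_n$. The factor $12$ absorbs the several events used: the two moments, the mean, and the operator-norm event below.

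To pass from the net to all of $U$, use the Lipschitz bound
\[
|\varphi_T(u^\top z_i)^q-\varphi_T(u'^\top z_i)^q|\le qT^{q-1}|(u-u')^\top z_i|.
\]
Control $\frac1n\sum_i|(u-u')^\top z_i|$ by $\|u-u'\|$ times the operator norm of the empirical covariance restricted to $\mathcal S$. That norm is $O(\sigma^2)$ with high probability once $n\gtrsim L_n$. The $1/n$ net radius makes this error lower order.

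\textbf{Step 3: ratio propagation.} This is the step where care is needed. Let $\epsilon_q$ be the uniform errors of $\hat m_{q,T}$, so
\[
\epsilon_2\lesssim\sigma^2\sqrt{L_n/n}+T^2L_n/n
\]
plus the centering term. The sample-size condition $n\ge C'\sigma^8(\log n)^3L_n$ is chosen exactly so that $\epsilon_2\le1/2$. Combined with $m_2\ge1$ from Lemma~\ref{lem:floor}, this gives $\hat m_{2,T}\ge1/2$ on all of $U$. In particular $U\subseteq\mathcal D_T$, so the empirical criterion is defined everywhere it is being compared. Then
\[
\left|\frac{\hat m_4}{\hat m_2^2}-\frac{m_4}{m_2^2}\right|\le\frac{\epsilon_4}{\hat m_2^2}+m_4\frac{|\hat m_2^2-m_2^2|}{\hat m_2^2m_2^2}\lesssim\epsilon_4+\sigma^4(\sigma^2)\epsilon_2.
\]
Here $m_4\lesssim\sigma^4$ and $\hat m_2+m_2\lesssim\sigma^2$.

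Collecting powers of $\sigma$ and $T$ yields the $\sigma^{10}$ prefactor and the two logarithmic terms in \eqref{eq:conc}. Writing $T^4=\sigma^4(8\log n)^2$ accounts for the $(\log n)^2L_n/n$ term. The $(\log n)^{3/2}\sqrt{L_n/n}$ term comes from the centering and Lipschitz terms, which carry $T^3$ times a $\sqrt{L_n/n}$ factor. The bookkeeping of exactly which $\sigma$ power appears is loose, and the constants are absorbed into $C$.
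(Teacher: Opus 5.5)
Your proposal is correct and follows the paper's proof essentially step for step: truncation bias at $T=\sigma\sqrt{8\log n}$, a net bound on $\|\Pi_{\mathcal S}(\bar y_n-\Eb\yt)\|$ multiplied by the $qT^{q-1}$ Lipschitz factor for centring, Bernstein plus a union bound over a $1/n$-net of size $(3n)^{\dsr}$, discretization, and ratio propagation using $\hat m_{2,T}\ge 1/2$. The differences are cosmetic. You bound the truncation bias by tail integration rather than Cauchy--Schwarz, and you control the discretization through the restricted empirical covariance rather than the average of $\|\Pi_{\mathcal S}X_i\|$.
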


The important feature for the search problem is the appearance of $d_{\mathrm{search}}$ in $L_n$. An unrestricted search uses $d_{\mathrm{search}}=p$, whereas exact restriction to $\Rsp$ uses $d_{\mathrm{search}}=r$.

The experiments optimize the untruncated criterion. A sample-checkable no-clipping certificate implies that the truncated and untruncated criteria coincide on the searched set. We evaluate this certificate for every configuration and use the truncated bound above. Without the certificate, the corresponding untruncated result has a second-order $L_n^2/n$ term; the full statement is given in Appendix~\ref{app:fs}.

\begin{lemma}[Dilution identity and global quadratic growth]
\label{lem:growth}
Take $\Gt$ with orthonormal columns,
\[
\bSig_1=\bSig_2=s^2I_r,
\quad
\mu_1-\mu_2=\Delta e_1,
\quad
\pi_1=\pi_2=1/2,
\]
and put
\[
V=1+s^2+\Delta^2/4,
\qquad
\kappa_\star=-\frac{\Delta^4}{8V^2}.
\]
With
\[
g=\Gt^\top u,
\qquad
t=g_1^2,
\qquad
w=\|g\|^2,
\]
\eqref{eq:kappa} becomes
\[
\kappa(u)
=
-\frac{\Delta^4t^2}
       {8(1+s^2w+\Delta^2t/4)^2}.
\]
The population minimizers are $\pm\Gt e_1$ and
\begin{equation}
\kappa(u)-\kappa_\star
\ge
|\kappa_\star|
\min(1,2/V)\,
\sin^2 d(u,u_\star),
\label{eq:growth}
\end{equation}
for every unit $u$, where
\[
d(u,u_\star)
=
\arccos|\langle u,u_\star\rangle|.
\]
\end{lemma}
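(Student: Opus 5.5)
The plan is to reduce everything to the two scalars $(t,w)$, prove the minimizer claim by a monotonicity argument in these scalars, and obtain the quadratic growth bound from a one-variable inequality in $t=\cos^2 d(u,u_\star)$.

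\textbf{Step 1: the formula for $\kappa(u)$.} For unit $u$, the projection is $u^\top\yt=g^\top z+u^\top\tilde\varepsilon$. I will read off the ingredients of \eqref{eq:kappa}:
\begin{itemize}
\item the projected separation is $\Delta g_1$;
\item both components have projected variance $v_1=v_2=s^2\|g\|^2+1=1+s^2w$, by the variance floor of Lemma~\ref{lem:floor};
\item the weights are $p=q=1/2$.
\end{itemize}
With $v_1=v_2$ and $p=q$, the first two terms of $N$ vanish. This leaves $N=\Delta^4 g_1^4(1-6/4)=-\tfrac12\Delta^4t^2$ and $pq=1/4$, which gives the displayed expression directly. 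Since $\Gt$ has orthonormal columns, $\Gt\Gt^\top$ is the projector $P$. Hence $0\le t\le w=\|Pu\|^2\le 1$.

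\textbf{Step 2: the population minimizers.} Write $|\kappa(u)|=f^2/8$ with
\[
f(t,w)=\frac{\Delta^2t}{1+s^2w+\Delta^2t/4}.
\]
The function $f$ is decreasing in $w$. Using $w\ge t$, I get
\[
f\le h(t):=\frac{\Delta^2t}{1+(V-1)t},
\]
where I have used $V-1=s^2+\Delta^2/4$. Since $h$ is increasing on $[0,1]$, it follows that $|\kappa(u)|\le h(1)^2/8=\Delta^4/(8V^2)=|\kappa_\star|$. Equality requires $t=w=1$. The condition $w=1$ means $u\in\Rsp$, and $t=1$ then forces $u=\pm\Gt e_1$. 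These two directions therefore attain $\kappa_\star$ and are the only minimizers.

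\textbf{Step 3: quadratic growth.} Take $u_\star=\Gt e_1$. Then $\langle u,u_\star\rangle=g_1$, so $t=\cos^2 d(u,u_\star)$ and $1-t=\sin^2 d$. By Step 2,
\[
\kappa(u)-\kappa_\star\;\ge\;|\kappa_\star|\,(1-r^2),
\qquad
r=\frac{h(t)}{h(1)}=\frac{Vt}{1+(V-1)t}.
\]
Factoring $1-r^2=(1-r)(1+r)$, a short computation gives
\[
1-r^2=(1-t)\,\phi(t),
\qquad
\phi(t)=\frac{1+(2V-1)t}{(1+(V-1)t)^2}.
\]
It therefore suffices to show $\phi(t)\ge\min(1,2/V)$ on $[0,1]$.

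The endpoint values are $\phi(0)=1$ and $\phi(1)=2/V$. The main point is to exclude an interior minimum. The sign of $\phi'$ is the sign of
\[
(2V-1)(1+(V-1)t)-2(V-1)(1+(2V-1)t)=1-(V-1)(2V-1)t.
\]
This expression is decreasing in $t$, because $V\ge1$. So $\phi$ first increases and then decreases on $[0,1]$. Its minimum over the interval is therefore attained at an endpoint, which proves \eqref{eq:growth}.

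I expect this last unimodality check to be the only delicate step. The rest is bookkeeping with \eqref{eq:kappa} and the orthonormality of $\Gt$.
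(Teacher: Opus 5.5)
Your proposal is correct and follows the paper's proof essentially line by line: the same reduction to $(t,w)$, the same use of $w\ge t$ to pass to the one-variable function $h$, and the same factorization $\kappa(u)-\kappa_\star\ge|\kappa_\star|(1-t)\phi(t)$ with $\phi(t)=(1+(1+2\beta)t)/(1+\beta t)^2$, $\beta=V-1$. The only thing you add is the derivative-sign check that $\phi$ first increases and then decreases on $[0,1]$, which the paper does not prove when it states that the minimum of $\phi$ is attained at an endpoint.
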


For $V\ge2$, the constant
$2|\kappa_\star|/V$ is the local coefficient of $\sin^2d$ at $u_\star$, so the bound is sharp to first order.

\begin{corollary}[Sufficient sample size for direction recovery]
\label{cor:nstar}
In the setting of Theorem~\ref{thm:conc} and Lemma~\ref{lem:growth}, let $\hat u$ be any global minimizer of $\hat\kappa_T$ over $U$ and suppose $u_\star\in U$. On the event of Theorem~\ref{thm:conc},
\[
\sin^2 d(\hat u,u_\star)
\le
\frac{2\eta}
     {|\kappa_\star|\min(1,2/V)},
\]
where $\eta$ is the right-hand side of \eqref{eq:conc}.

In particular, for $V\ge2$, $d(\hat u,u_\star)\le\theta_0$ is ensured by
\begin{equation}
n\ge
C\,\sigma^{20}V^2
\frac{
d_{\mathrm{search}}\log(3n)(\log n)^3
}{
\kappa_\star^2\sin^4\theta_0
}.
\label{eq:nstar}
\end{equation}
\end{corollary}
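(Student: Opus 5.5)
The argument is the standard ``uniform deviation plus growth'' comparison. I will work on the event of Theorem~\ref{thm:conc}, on which $\sup_{u\in U}|\hat\kappa_T(u)-\kappa(u)|\le\eta$.

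First, I bound the population gap of $\hat u$. Since $\hat u$ minimizes $\hat\kappa_T$ over $U$ and $u_\star\in U$, we have $\hat\kappa_T(\hat u)\le\hat\kappa_T(u_\star)$. This needs $u_\star\in\mathcal D_T$, so that $\hat\kappa_T(u_\star)$ is defined. On the concentration event this holds, because a finite $\hat\kappa_T(u_\star)$ within $\eta$ of $\kappa(u_\star)$ is implicit in the bound. If needed, I would also note that the proof of Theorem~\ref{thm:conc} lower-bounds $\hat m_{2,T}$ by roughly $1/2$ using the variance floor of Lemma~\ref{lem:floor}. Chaining gives
\[
\kappa(\hat u)\le\hat\kappa_T(\hat u)+\eta\le\hat\kappa_T(u_\star)+\eta\le\kappa(u_\star)+2\eta=\kappa_\star+2\eta .
\]

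Second, I invoke the global quadratic growth \eqref{eq:growth} of Lemma~\ref{lem:growth} at $u=\hat u$:
\[
|\kappa_\star|\min(1,2/V)\sin^2 d(\hat u,u_\star)\le\kappa(\hat u)-\kappa_\star\le 2\eta .
\]
Dividing by the growth constant gives the first displayed claim. The growth inequality holds for every unit $u$, not only near $u_\star$, so no localization argument is required. This is the point of having the global version.

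Third, I derive the sample-size condition. For $V\ge2$ we have $\min(1,2/V)=2/V$, and the bound becomes $\sin^2 d\le \eta V/|\kappa_\star|$. It suffices that $\eta\le|\kappa_\star|\sin^2\theta_0/V$, since then $\sin^2 d(\hat u,u_\star)\le\sin^2\theta_0$, and $d\in[0,\pi/2]$ gives $d\le\theta_0$.

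Now $\eta=C\sigma^{10}[(\log n)^{3/2}\sqrt{L_n/n}+(\log n)^2L_n/n]$, and I require each term to be at most half the target.
\begin{itemize}
\item For the first term, squaring gives $n\ge 4C^2\sigma^{20}V^2(\log n)^3L_n/(\kappa_\star^2\sin^4\theta_0)$. This is \eqref{eq:nstar} with $L_n$ in place of $d_{\mathrm{search}}\log(3n)$.
\item The second term is dominated by the first whenever $(\log n)^{1/2}\sqrt{L_n/n}\le1$. That holds under the theorem's requirement $n\ge C'\sigma^8(\log n)^3L_n$, provided $\sigma\gtrsim1$. This is true since $\sigma^2\gtrsim1$ by the variance floor in Lemma~\ref{lem:floor}. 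Hence the second term is at most a constant times the first.
\end{itemize}

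I also need the theorem's hypothesis $n\ge C'\sigma^8(\log n)^3L_n$ to be implied by \eqref{eq:nstar}. This follows because $V^2\sigma^{12}/(\kappa_\star^2\sin^4\theta_0)\ge1$ up to constants: $|\kappa_\star|\le 1/2$ (indeed $\Delta^4/(8V^2)\le2$ and excess kurtosis is at least $-2$) and $\sin\theta_0\le1$. Finally, I absorb $\log(12/\delta)$ into $L_n\lesssim d_{\mathrm{search}}\log(3n)$ for fixed $\delta$, or treat $\delta$ as a constant, and absorb everything into the constant $C$.

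The main obstacle is this bookkeeping in the last step. It involves absorbing the $\delta$ term, showing the $L_n/n$ term is lower order, and confirming the theorem's sample-size precondition, all while keeping the constants absolute and the powers $\sigma^{20}V^2$ exact. The first two steps are mechanical.
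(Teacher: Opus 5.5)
Your proposal is correct and follows the paper's proof: the three-term chain giving $\kappa(\hat u)-\kappa_\star\le 2\eta$, the global growth bound of Lemma~\ref{lem:growth}, the target $\eta\le|\kappa_\star|\sin^2\theta_0/V$, and a split of $\eta$ into two halves. The only difference is cosmetic: you dispose of the $(\log n)^2L_n/n$ term using the theorem's own sample-size precondition, whereas the paper checks that \eqref{eq:corT1} implies \eqref{eq:corT2} using $\sigma\ge1$, $V\ge2$ and $|\kappa_\star|\le2$. One small slip: you write $|\kappa_\star|\le 1/2$, but the correct bound is $|\kappa_\star|\le 2$, which your own parenthetical gives and which is all the argument needs.
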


The leading term therefore gives the sufficient scaling
\[
n\gtrsim
\frac{d_{\mathrm{search}}}{\kappa_\star^2}
\]
up to logarithmic and problem-dependent factors.

The untruncated bound has a second-order branch with stronger dependence on $d_{\mathrm{search}}$; the full comparison is in Appendix~\ref{app:fs}. The measured thresholds lie in the regime where the leading branch is the relevant one.

\subsection{Two failure modes of the unrestricted search}
\label{sec:lower}

The unrestricted search fails in two related ways.

First, the empirical excess kurtosis is bounded below by $-2$. When the Gaussian complement is sufficiently large, that floor can be attained by a completely uninformative direction. Proposition~\ref{prop:degen} makes this exact: for even $n$ and
\[
p-r\ge n-1,
\]
with probability one there is a unit
$b\in\Rsp^\perp$ such that
\[
\hat\kappa(b)=-2,
\qquad
\kappa(b)=0,
\qquad
d(b,u_\star)=\pi/2.
\]
When $r\le n-2$, no direction in $\Rsp$ attains the empirical floor almost surely.

Second, the complement contains spurious minima even before the exact floor is attainable. Proposition~\ref{prop:depth} shows that, under its stated conditions, the minimum over $\Rsp^\perp$ is at most
\[
-c\sqrt{
\frac{\log(m\wedge n^a)}{n}
}
+
C\frac{\log(m/\delta)}{n},
\qquad
m=p-r,
\]
with high probability.

The exact degeneracy is therefore the sharpest form of a broader search-space problem: the empirical objective increasingly favors directions that contain no signal as irrelevant dimensions are added.

\paragraph{Why use a fourth moment?}
The second-moment route is cheaper in the mean-separated, well-conditioned setting used for the scaling comparison. Fourth-moment pursuit remains useful when separation is expressed through covariance rather than the mean; this regime is outside the main scaling experiment.

\section{Comparing exact-subspace search with subspace estimation}
\label{sec:law}

We now compare two procedures:

\begin{enumerate}
\item \textbf{Exact-subspace KPP:} search for the minimum empirical kurtosis over the true signal subspace $\Rsp$.
\item \textbf{PCA subspace + search:} estimate the signal subspace from the sample covariance and then search within the estimated subspace.
\end{enumerate}

The first is an oracle benchmark that isolates the cost of finding the direction once the search space is known. It is not an end-to-end data-driven procedure. The second includes the cost of discovering that search space.

Let $\varsigma$ be the gain with which $\Gt$ transmits the latent mean-gap axis, so that the observable discriminant is $\varsigma$ times a unit left singular vector. For the scaling calculation, the gain is applied to that singular direction while the remaining signal directions are held fixed.

In the isotropic two-component model, the discriminant contributes a covariance spike of size
\[
\varsigma^2(s^2+\Delta^2/4).
\]
Resolving this spike against the Marchenko--Pastur bulk gives the threshold
\[
n\gtrsim p/\varsigma^4
\]
up to model-dependent constants. Thus
\[
\varsigma_{\mathrm{PCA}}
\propto
(p/n)^{1/4}.
\]

For exact-subspace KPP, define
\[
V_\varsigma
=
1+\varsigma^2(s^2+\Delta^2/4).
\]
The population kurtosis of the discriminant is
\begin{equation}
|\kappa_\star(\varsigma)|
=
\frac{\Delta^4\varsigma^4}
     {8V_\varsigma^2}.
\label{eq:kappasigma}
\end{equation}
Combining \eqref{eq:kappasigma} with Corollary~\ref{cor:nstar} gives
\[
n
\gtrsim
d_{\mathrm{search}}\,
\frac{V_\varsigma^6}{\varsigma^8}
\]
up to logarithmic and problem-dependent factors. Hence, in the weak-gain regime,
\[
\varsigma_{\mathrm{KPP}}
\propto
(d_{\mathrm{search}}/n)^{1/8}
\]
to leading order. At fixed search dimension,
\begin{equation}
\frac{\varsigma_{\mathrm{KPP}}}
     {\varsigma_{\mathrm{PCA}}}
\propto
\left(\frac{n}{p^2}\right)^{1/8}.
\label{eq:law}
\end{equation}

This comparison has a specific scope. It assumes two components, isotropic latent covariance, and that the discriminant is the covariance spike that determines the PCA threshold. If a nuisance direction has a larger spike, PCA can recover that direction before the discriminant. If the components separate only through covariance, the second moment does not provide the competing signal. We therefore interpret \eqref{eq:law} only in the mean-separated, well-conditioned regime.

\paragraph{Empirical threshold comparison.}
We define $\varsigma_{\mathrm{PCA}}(n)$ as the smallest gain on the tested gain grid at which the leakage
\[
\arcsin\|(I-\hat P)u_\star\|
\]
falls below the specified angular tolerance.

We define $\varsigma_{\mathrm{KPP}}(n)$ as the smallest gain on the same grid at which the exact-subspace KPP search recovers $u_\star$ to the same angular tolerance.

The two thresholds are therefore defined in terms of direction recovery. Thresholds that are not attained within the tested gain range are treated according to the censoring procedure described in Appendix~\ref{app:details}.

At fixed search dimension, the measured ratio follows a power law in $n/p^2$. Across $\vkLawCalCells$ configurations spanning five orders of magnitude in $n/p^2$, the calibrated fit gives exponent
\[
\vkLawCalExp
\qquad
[\vkLawCalLo,\vkLawCalHi]
\]
with
\[
R^2=\vkLawCalRsq.
\]
The confidence interval contains the leading-order predicted $1/8$ exponent.

The two thresholds are measured separately rather than by fitting a two-parameter curve directly to their ratio. The KPP threshold depends on the dimension of the searched space, while the PCA threshold depends on the ambient dimension through covariance estimation. This separation lets us test the two dependencies independently.

\begin{table}[t]
\centering
\small
\setlength{\tabcolsep}{4pt}
\begin{tabular}{@{}lrcr@{}}
\toprule
& exponent & 95\% CI & $R^2$ \\
\midrule
\multicolumn{4}{@{}l}{\emph{Collapse on $n/p^2$ at fixed $\dsr$}}\\
fixed $\const{15}^{\circ}$ criterion
&
$\vkLawFixExp$
&
$[\vkLawExpLo,\vkLawExpHi]$
&
$\vkLawRsq$
\\
calibrated criterion
&
$\vkLawCalExp$
&
$[\vkLawCalLo,\vkLawCalHi]$
&
$\vkLawCalRsq$
\\
\addlinespace[2pt]
\multicolumn{4}{@{}l}{\emph{Rank dependence, $\dsr\in\{2,\ldots,32\}$}}\\
$\varsigma_{\mathrm{KPP}}$
&
$\vkRankExp$
&
$[\vkRankLo,\vkRankHi]$
&
\\
$\varsigma_{\mathrm{PCA}}$
&
$\vkRankPca$
&
&
\\
\addlinespace[2pt]
\multicolumn{4}{@{}l}{\emph{Candidate rank forms, $R^2$}}\\
power, $\log\dsr$, linear
&
\multicolumn{3}{r}{
$\vkFormPower$,
$\vkFormLog$,
$\vkFormLinear$
}
\\
no rank dependence
&
\multicolumn{3}{r}{$0$}
\\
\bottomrule
\end{tabular}
\caption{Measured threshold scalings. The fixed-rank collapse is fitted over $\vkLawCalCells$ configurations and the rank comparison over $\vkRankCells$. The calibrated criterion is $\vkCritCalLo^\circ$--$\vkCritCalHi^\circ$ on the collapse grid. The tested rank range does not distinguish the candidate functional forms.}
\label{tab:fits}
\end{table}

\begin{table}[t]
\centering
\small
\begin{tabular}{rrrrrr}
\toprule
$p$ & $n$ & $n/p^2$ & $\varsigma_{\mathrm{PCA}}$ & $\varsigma_{\mathrm{KPP}}$ & ratio \\
\midrule
1024 & 4096 & 0.004 & 2.200 & 0.924 & 0.42 \\
512 & 4096 & 0.016 & 1.687 & 0.868 & 0.51 \\
256 & 4096 & 0.062 & 1.240 & 0.911 & 0.73 \\
128 & 4096 & 0.250 & 1.022 & 1.098 & 1.07 \\
128 & 16384 & 1.000 & 0.659 & 0.819 & 1.24 \\
32 & 4096 & 4.000 & 0.676 & 0.908 & 1.34 \\
128 & 65536 & 4.000 & 0.445 & 0.621 & 1.40 \\
32 & 16384 & 16.000 & 0.449 & 0.711 & 1.58 \\
128 & 262144 & 16.000 & 0.313 & 0.490 & 1.57 \\
32 & 65536 & 64.000 & 0.309 & 0.610 & 1.98 \\
32 & 262144 & 256.000 & 0.219 & 0.493 & 2.26 \\
\bottomrule
\end{tabular}

\caption{Threshold discriminant gains and their ratio. The ratio crosses one at
$n\approx\vkCrossFifteen\,p^2$ under the \const{15}$^\circ$ criterion at which it was measured;
Section~\ref{sec:notaconstant} shows why that number is not transportable. The same thresholds
computed on cells with no optimizer gap, and re-searched at a \const{512}-fold larger restart
budget, are in Appendix~\ref{app:restated} and agree to within the bootstrap interval of the
crossing.}
\label{tab:thresholds}
\end{table}

\begin{figure}[t]
\centering
\includegraphics[width=\columnwidth]{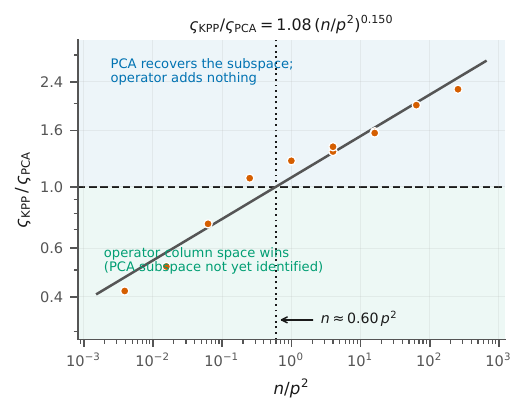}
\caption{Ratio of the exact-subspace KPP threshold to the PCA-subspace threshold against $n/p^2$ at fixed $r$. The fitted fixed-rank power law is shown for reference. The ratio crossing depends on the success criterion and model configuration; the marked value is for the baseline $\const{15}^{\circ}$ criterion and is not interpreted as a universal constant.}
\label{fig:law}
\end{figure}

\paragraph{The search-dimension dependence is different.}
Sweeping
\[
\dsr\in\{2,4,8,16,32\}
\]
at matched $(p,n)$ gives a rank exponent of $\vkRankExp$ for $\varsigma_{\mathrm{KPP}}$, compared with the predicted leading-order exponent $1/8$. The PCA threshold shows no corresponding rank dependence.

The rank effect is not explained by changing the angular criterion. Calibrating the criterion against the null still leaves the exponent well above $1/8$.

Over the tested sixteenfold range, power-law, logarithmic, and linear forms fit the rank contrast comparably well:
\[
R^2=
\vkFormPower,
\qquad
\vkFormLog,
\qquad
\vkFormLinear,
\]
respectively, while the no-rank model has $R^2=0$.

The data therefore establish a rank dependence but do not identify its functional form. The second-order branch of the untruncated concentration bound has exponent $1/2$, so the observed value lies between the two sufficient-bound branches.

\paragraph{Direction recovery versus downstream error.}
The scaling in \eqref{eq:law} concerns locating $u_\star$. When the same thresholds are instead defined through downstream excess error, the scaling largely disappears. In the weak-signal regime, Bayes error approaches chance, making excess error less sensitive to whether the discriminant direction has been accurately recovered.

\subsection{The threshold crossing depends on calibration}
\label{sec:notaconstant}

The ratio in Figure~\ref{fig:law} crosses one, but the location of that crossing is not a universal function of $(n,p)$.

Changing the angular success criterion moves the crossing by a factor of four across the criteria tested. Changing which covariance spike is strongest moves the crossing by two orders of magnitude. An anisotropic latent covariance changes the prefactor in a way that is not constant across ranks.

\begin{table}[ht]
\centering
\small
\setlength{\tabcolsep}{4pt}
\begin{tabular}{@{}lcc@{}}
\toprule
Condition & Crossing & Exponent \\
& ($n/p^2$) & \\
\midrule
$\const{10}^{\circ}$ criterion
&
$\vkCrossTen$
&
$\vkExpCritTen$
\\
$\const{15}^{\circ}$ criterion (baseline)
&
$\vkCrossFifteen$
&
$\vkExpCritFifteen$
\\
$\const{20}^{\circ}$ criterion
&
$\vkCrossTwenty$
&
$\vkExpCritTwenty$
\\
Strongest-spike discriminant
&
$\vkTaskACross$
&
---
\\
Anisotropic $\bSig$
&
not determined
&
---
\\
\bottomrule
\end{tabular}
\caption{Sensitivity of the threshold comparison. The crossing changes with the angular criterion and with the covariance-spike configuration. For the anisotropic row, the prefactor shift is not constant across the rank grid, so no single crossing is reported.}
\label{tab:crossing_shifts}
\end{table}

\subsection{Calibrating the angular criterion}
\label{sec:calibration}

A fixed angular threshold can confound recovery with the geometry of the searched subspace. In the anisotropic model, nuisance directions can carry much more projection variance than the discriminant. The empirical kurtosis landscape can then be flat over much of the subspace.

Calibrating to a common false-positive rate does not fully remove this effect because the null distribution changes its scale as well as its tail mass. Referring the criterion to the location of the null instead---using a fixed fraction of its median---removes the observed anomaly and leaves the measured exponent stable across the fractions tested (Appendix~\ref{app:sweeps}).

The calibration must also respect the different nulls of the two procedures. Removing the mean gap is a null for fourth-moment KPP, but it is not a null for PCA because the latent covariance still produces a population covariance spike:
\[
\operatorname{cov}(\yt)
=
s^2\Gt\Gt^\top+I.
\]
Calibrating the PCA arm against that null would therefore introduce an arbitrary gain threshold into the comparison.

We instead calibrate the fourth-moment side and evaluate the PCA side at a fixed leakage tolerance, reporting its small residual dependence on $\dsr$.

\section{Sample splitting does not repair the search}
\label{sec:splitting}

A separate validation sample can tell us whether a proposed direction is good. It cannot recover a direction that the search on the fitting sample never proposes.

We test this by appending observation coordinates that are independent of the latent regime. These coordinates leave the exact Bayes decisions unchanged but enlarge the unrestricted search space. Out-of-sample kurtosis for the unrestricted procedure remains near zero while the restricted procedure approaches $\kappa_\star$, and the unrestricted directions become nearly orthogonal to $u_\star$ as the ambient dimension increases.

\begin{figure}[t]
\centering
\includegraphics[width=\columnwidth]{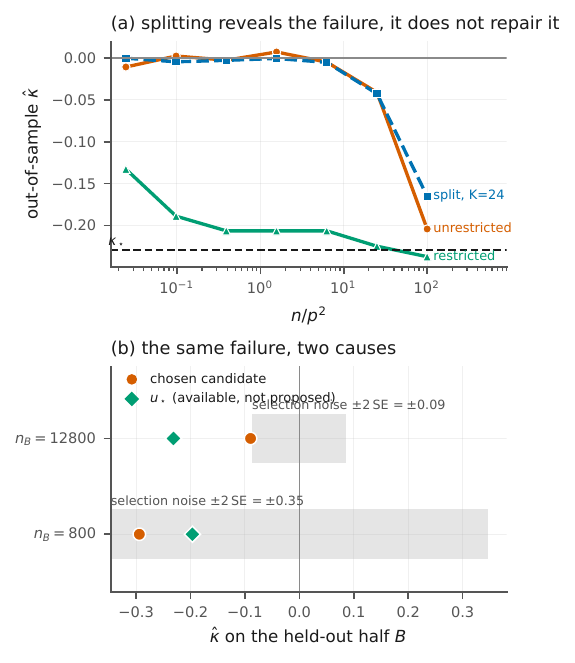}
\caption{Sample splitting exposes but does not remove the unrestricted-search failure. (a) Out-of-sample $\hat\kappa$ for the split search follows the unrestricted full-sample arm and remains near zero, while the restricted arm approaches $\kappa_\star$. (b) With $n_B=\const{800}$, validation noise can make a spurious candidate look better than $u_\star$; with $n_B=\const{12800}$, validation correctly ranks $u_\star$ above the selected candidate, but the search never proposed $u_\star$.}
\label{fig:splitting}
\end{figure}

There are therefore two distinct failures. With the smaller validation sample, selection noise can favor a spurious candidate. With the larger validation sample, selection correctly identifies $u_\star$ among the available candidates, but the fitting-stage search has already discarded it.

Doubling the number of points in the split arms leaves the unrestricted direction largely unchanged while improving the restricted arm. The unrestricted procedure is therefore search-limited in this regime, whereas the restricted procedure is sample-limited.

The same experiments compare exact and estimated subspaces. The operator-based subspace succeeds in almost every configuration. In the single failing cell, rank selection based on the Baik--Ben Arous--P\'ech\'e criterion recovers the discriminant, whereas supplying the exact rank spends additional dimensions on noise and performs worse. Thus the comparison changes when the discriminant is not the spike that determines subspace recovery; this is the scope condition discussed in Section ~\ref{sec:law}.

Additional operator-perturbation diagnostics, spurious-minimum-depth experiments, and optimizer-restart checks are reported in the supplement. They provide supporting evidence without changing the main scaling comparison.

\section{Related work}
\label{sec:related}

\paragraph{Projection pursuit.}
Projection pursuit originates with \citet{friedman1974projection} and \citet{huber1985projection}, and minimum-kurtosis indices for mixture separation with \citet{pena2001cluster}. Kurtosis is also a standard fourth-moment contrast in independent component analysis \citep{hyvarinen2000independent}.

\paragraph{High-dimensional projection pursuit.}
\citet{bickel2018projection} show that in high dimensions one can find directions in Gaussian data whose empirical projections resemble prescribed non-Gaussian laws. \citet{montanari2025overparametrizedlineardimensionalityreductions} characterize the Wasserstein radius of low-dimensional projections of i.i.d.\ Gaussian data in an overparameterized regime.

Our setting isolates a related structural question: when the observation model identifies a Gaussian complement known to contain no latent information, can that complement be removed without changing the population target? We give an exact kurtosis degeneracy, a population dilution identity, a finite-sample searched-subspace bound, and a comparison between oracle subspace restriction and data-driven subspace estimation.

\paragraph{Kurtosis-based discriminant estimation.}
\citet{radojicic2021large} derive large-sample properties for blind projection-pursuit estimators of the linear discriminant in two-group Gaussian mixtures, including kurtosis- and skewness-based estimators and their asymptotic covariance matrices. Their analysis is estimator-centric and asymptotic, whereas our focus is the finite-sample effect of the searched observation space.

\paragraph{Other projection indices.}
\citet{mukherjee2023wassersteinprojectionpursuitnongaussian} study Wasserstein projection pursuit and prove recovery under a linear $p/n$ scaling. Their index differs from the fourth-moment criterion studied here, but the same high-dimensional phenomenon---an empirical search over many directions---is relevant.

\paragraph{Subspace restriction.}
Likelihood-informed subspaces \citep{cui2014likelihood,zahm2022certified} reduce parameter-space dimension for specified inverse problems. Our restriction is different: it acts on observation-space directions in an unlabeled projection-pursuit search. \citet{francisci2026centralsubspacedatadepth} provide another example in which a depth criterion is optimized over a subspace rather than a single direction.

\section{Discussion and limitations}
\label{sec:limits}

The results separate three quantities that are easy to conflate: the intrinsic information available in the observations, the cost of estimating a useful search subspace, and the cost of finding a direction by minimizing a fourth-moment criterion.

A key consequence is that the information-theoretic problem can be easier than the fourth-moment search. In particular, the $\varsigma^{-8}$ dependence arises from the sufficient-bound analysis of the fourth-moment criterion, not from an information-theoretic lower bound for the full inference problem.

A two-point construction makes this distinction explicit. Index instances by the latent mean-gap axis and take two axes separated by $2\theta_0$. The Kullback--Leibler divergence is $\Theta(|\kappa_\star|)$ rather than $\Theta(\kappa_\star^2)$, because the hypotheses already differ at the second moment. The resulting two-point argument therefore gives $\varsigma^{-4}$, not $\varsigma^{-8}$. A likelihood-ratio test separates the same pair below the $\dsr/\kappa_\star^2$ scale. Thus this standard route does not yield a matching $\varsigma^{-8}$ lower bound for the full statistical problem.

A lower bound specific to minimizers of the empirical fourth-moment contrast remains open; such a result would have to control fluctuations of the contrast and their supremum over the searched set, where the search-dimension dependence also enters.

The scope is deliberately narrow. The theory and scaling comparison use two-component mixtures with the discriminant on a single left singular direction. Unequal mixture weights are covered by the population calculation, but $M>2$ is not. Extending the result to multiple components would require a different estimand and a corresponding subspace-recovery analysis.

The functional form of the search-dimension dependence and the exponent governing the depth of spurious minima also remain open. Finally, the theory concerns the exact global minimizer, while the experiments use multi-start gradient descent. The restart experiments in the supplement provide evidence that the reported thresholds are not optimizer artifacts, but they do not establish global optimization guarantees.

\emph{Recoverability and discoverability are different problems}.
When the observation model identifies a signal subspace, restricting an unlabeled projection-pursuit search to that subspace can remove a high-dimensional failure mode without changing the population target on the negative-kurtosis branch. The remaining cost is the separate problem of estimating that subspace or finding the direction within it.

\section*{AI Use Statement}

Generative AI tools were used during preparation of this manuscript for conceptual discussion and development, mathematical proof checking, literature-search assistance, drafting, editing, and organization. The authors reviewed and verified all AI-assisted material used in the manuscript, including mathematical arguments, citations, and interpretations. Generative AI was not used to generate experimental data or to determine the reported experimental results. The authors take full responsibility for the final content of the submission, including all mathematical statements, experimental results, citations, and interpretations. Generative AI systems are not authors of this work.


\bibliographystyle{plainnat}
\bibliography{references}


\onecolumn
\appendix
\aistatstitle{Subspace Restriction in Unlabeled Projection Pursuit:\\
A Scaling Law and Where It Fails --- Supplementary Material}

\section{Statements moved from the main text}
\label{app:restated}
We restate the three results summarized in the main text.

\begin{theorem}[Uniform concentration for the untruncated criterion]
\label{thm:untrunc}
Let $\mathcal S\subseteq\Rb^p$ have dimension $\dsr$ and let $U\subseteq\mathcal S\cap S^{p-1}$.
Put $L_n=\dsr\log(3n)+\log(18/\delta)$. Under the assumptions of Lemma~\ref{lem:floor} there are
absolute constants $C,C'$ such that, if $n\ge C'\sigma^8L_n^2$, then with probability at least
$1-\delta$,
\begin{equation}
\sup_{u\in U}|\hat\kappa(u)-\kappa(u)|
\le C\sigma^{10}\left[\sqrt{\tfrac{L_n}{n}}+\tfrac{L_n^2}{n}\right].
\label{eq:untrunc}
\end{equation}
Under the plug-in convention, $\hat\kappa_{\mathrm{impl}}+3=((n-1)/n)^2(\hat\kappa+3)$, and the resulting difference is controlled on the same event.
\end{theorem}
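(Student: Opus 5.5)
The argument reduces the kurtosis error to uniform deviations of the centered second and fourth empirical moments over $U$, and then controls those deviations with a net argument plus chaining. It follows the structure of Theorem~\ref{thm:conc} but drops the truncation.

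\textbf{Step 1: reduction to moments.} Write $m_q(u)=\Eb[(u^\top(\yt-\Eb\yt))^q]$ and $\hat m_q(u)=n^{-1}\sum_i(u^\top z_i)^q$ with $z_i=\yt_i-\bar y_n$. By Lemma~\ref{lem:floor}, $m_2(u)\ge1$ and $m_4(u)\le C\sigma^4$. From
\[
\hat\kappa-\kappa=\frac{\hat m_4-m_4}{\hat m_2^2}+m_4\,\frac{m_2^2-\hat m_2^2}{\hat m_2^2m_2^2},
\]
the problem reduces to bounding $\sup_U|\hat m_2-m_2|$ and $\sup_U|\hat m_4-m_4|$. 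It suffices to show $\sup_U|\hat m_2-m_2|\le1/2$, so that $\hat m_2\ge1/2$. The condition $n\ge C'\sigma^8L_n^2$ is chosen to guarantee this. The powers of $\sigma$ then combine into $\sigma^{10}$: $\sigma^4$ from $m_4$, times the $\sigma^2$-scale deviation of $\hat m_2$, plus $\sigma^4$-scale deviations of $\hat m_4$.

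\textbf{Step 2: removing the centering.} Expand $(u^\top(x_i-\bar x))^q$ with $x_i=\yt_i-\Eb\yt$. The cross terms involve $u^\top\bar x$, which is uniformly $O(\sigma\sqrt{L_n/n})$ over the $\dsr$-dimensional sphere with probability $1-\delta/6$, by a standard $1/2$-net argument. These terms contribute only lower order, multiplied by empirical third and lower moments. The lower moments are controlled in the same way as in Step 3.

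\textbf{Step 3: uniform control of the uncentered moments.} Take a $1/n$-net $\mathcal N$ of $\mathcal S\cap S^{p-1}$, with $|\mathcal N|\le(3n)^{\dsr}$, which is where $L_n$ comes from.
\begin{itemize}
\item For $q=2$, Bernstein's inequality for sub-exponential variables gives deviations $\sigma^2(\sqrt{L_n/n}+L_n/n)$ on $\mathcal N$.
\item For $q=4$, $(u^\top x)^4$ is only $\psi_{1/2}$. A Bernstein-type inequality for Weibull-tailed variables gives $\sigma^4(\sqrt{L_n/n}+L_n^2/n)$. The heavier tail parameter produces the $L_n^2/n$ term, in place of $L_n/n$ with logarithmic factors in the truncated case.
\item To pass from $\mathcal N$ to all of $U$, use $|a^4-b^4|\le|a-b|(|a|+|b|)^3$. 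This needs a crude bound $\max_i\|P_{\mathcal S}x_i\|\lesssim\sigma\sqrt{\dsr+\log(n/\delta)}$, so the net discretization error is $O(n^{-1}\sigma^4L_n^2)$, which is absorbed.
\end{itemize}
A union bound over the three or four events, each allotted $\delta/6$, yields the constant $18/\delta$.

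\textbf{Step 4: plug-in convention.} Since $\hat\kappa_{\mathrm{impl}}-\hat\kappa=(((n-1)/n)^2-1)(\hat\kappa+3)$, and $|\hat\kappa+3|\le C\sigma^4$ on the same event, the difference is $O(\sigma^4/n)$. This is absorbed into the bound.

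\textbf{Main obstacle.} The hard step is Step 3 for $q=4$: obtaining a concentration bound for sums of $\psi_{1/2}$ variables with the sharp $\sqrt{L_n/n}$ leading term. I would first try the moment-generating bound of Kuchibhotla–Chakrabortty, or Hanson–Wright-type tail bounds for polynomials of sub-Gaussian variables. Failing that, I would split at level $\sigma\sqrt{L_n}$ and handle the excess with a maximal inequality, which naturally produces the $L_n^2/n$ term.
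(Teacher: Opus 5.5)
Your proposal is correct and follows the paper's proof essentially step for step. The paper likewise applies Theorem~3.1 of Kuchibhotla--Chakrabortty to the $\psi_{2/q}$ variables $X_u^q$ on a $1/n$-net of size $(3n)^{\dsr}$, extends to $U$ via the mean-value bound and $\max_i\|\Pi_{\mathcal S}X_i\|\le C\sigma\sqrt{L_n}$ (giving the $\sigma^4L_n^2/n$ discretization term), and removes the sample centring through the binomial identities with $\sup_U|u^\top(\bar y_n-\Eb\yt)|\le C\sigma\sqrt{L_n/n}$; it closes with the same ratio bound once $\hat m_2\ge 1/2$. The only cosmetic differences are that the paper puts $q=3$ explicitly into the net union bound; the factor $2\cdot3\cdot3$ there is what produces the $18$ in $\log(18/\delta)$, rather than your allotment of events. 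The paper also states the plug-in conversion without your explicit $O(\sigma^4/n)$ estimate.
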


\begin{proposition}[Exact degeneracy of the unrestricted criterion]
\label{prop:degen}
Let $n$ be even and $p-r\ge n-1$. Under the whitened model, with probability one there is a unit
$b\in\Rsp^\perp$ such that
\begin{equation}
\hat\kappa(b)=-2,\qquad \kappa(b)=0,\qquad d(b,u_\star)=\pi/2
\label{eq:degen}
\end{equation}
whenever the population minimizer $u_\star$ is unique in $\Rsp$. If $r\le n-2$, then with
probability one no direction in $\Rsp$ attains the empirical floor.
\end{proposition}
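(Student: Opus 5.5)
The plan is to characterize exactly when the empirical floor is attained, and then handle the two claims by linear algebra on the centered data matrix. Write $x_i=u^\top z_i$ with $z_i=\yt_i-\bar y_n$, so that $\sum_i x_i=0$. By Cauchy--Schwarz, $\hat m_4\ge\hat m_2^2$, hence $\hat\kappa(u)\ge-2$. Equality holds iff $x_i^2$ is constant in $i$, i.e.\ $x_i=c\,s_i$ with $c>0$ and $s\in\{\pm1\}^n$. The centering constraint then forces $\sum_i s_i=0$. So the floor $-2$ is attained at $u$ exactly when the centered projection vector $(u^\top z_i)_i$ is a nonzero multiple of a balanced sign vector. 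Such vectors exist precisely because $n$ is even, and there are only finitely many of them.

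\emph{Existence in $\Rsp^\perp$.} Let $m=p-r\ge n-1$ and let $Q$ be an $p\times m$ orthonormal basis of $\Rsp^\perp$. By Lemma~\ref{lem:complement}, $Q^\top\yt_i=Q^\top\tilde\varepsilon_i$ are i.i.d.\ $\NPDF(0,I_m)$. Write $E$ for the $n\times m$ matrix with these rows and $C=I-\tfrac1n\mathbf 1\mathbf 1^\top$ for the centering matrix. The centered projections of $b=Q\beta$ are then $CE\beta$. Since $E$ has i.i.d.\ Gaussian entries and $m\ge n-1$, $E$ has rank $\min(n,m)\ge n-1$ almost surely. Consequently $\operatorname{col}(CE)=C\,\operatorname{col}(E)$ has dimension $n-1$, so it equals $\mathbf 1^\perp$. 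Fix any balanced sign vector $s\in\mathbf 1^\perp$ and solve $CE\beta=s$. The solution satisfies $\beta\ne0$, and we set $b=Q\beta/\|\beta\|$. Then $\hat\kappa(b)=-2$ (and $\hat m_2(b)>0$). Moreover $b^\top\yt\sim\NPDF(0,1)$ by Lemma~\ref{lem:complement}, so $\kappa(b)=0$. Finally $u_\star\in\Rsp$ (well defined by the uniqueness assumption) is orthogonal to $b$, giving $d(b,u_\star)=\pi/2$.

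\emph{Non-attainment in $\Rsp$ when $r\le n-2$.} Take an orthonormal basis of $\Rsp$ and let $W=A+E'$ be the $n\times r$ matrix of coordinates of $P\yt_i$. Here $A$ collects $\Gt z_i$ and $E'$ has i.i.d.\ $\NPDF(0,I_r)$ rows independent of $A$. Condition on $A$. By the characterization above, some $u\in\Rsp$ attains the floor only if some balanced $s$ lies in $\operatorname{col}(CW)$, a subspace of $\mathbf 1^\perp$ of dimension at most $r$. For fixed $s$, this event is contained in the event that every $(r+1)\times(r+1)$ minor of $[CW,\,s]$ vanishes, which is a polynomial condition in the entries of $E'$.

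The main obstacle is to show this polynomial is not identically zero, i.e.\ to exhibit one $E'$ violating it. Since $r\le n-2<\dim\mathbf 1^\perp$, there is an $r$-dimensional subspace of $\mathbf 1^\perp$ avoiding $s$. Take $B$ with that column space inside $\mathbf 1^\perp$, so $CB=B$, and set $E'=-A+B$. Then $CW=B$ has full rank $r$ and $s\notin\operatorname{col}(B)$, so some minor is nonzero. The zero set of a nonzero polynomial is Lebesgue-null, and $E'$ has a density. Hence $s\in\operatorname{col}(CW)$ has conditional probability zero. Taking a union over the finitely many balanced $s$ and integrating over $A$ gives the almost-sure statement.
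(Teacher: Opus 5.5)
Your proposal is correct and follows the same route as the paper's proof: the Cauchy--Schwarz equality case, solving the centred complement system $CE\beta=s$ for a balanced sign vector, and a probability-zero argument inside $\Rsp$. Your conditioning on the signal block $A$ and exhibiting a nonvanishing $(r+1)\times(r+1)$ minor gives a rigorous version of what the paper asserts in one line as ``general position''. The one slip is the boundary case $m=n-1$: there $\operatorname{rank}E=n-1$ does not by itself give $\dim\operatorname{col}(CE)=n-1$, since you also need $\mathbf 1\notin\operatorname{col}(E)$. This holds almost surely because the columns of $CE$ are i.i.d.\ standard Gaussian on $\mathbf 1^\perp$.
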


\begin{proposition}[Spurious complement minima]
\label{prop:depth}
Let $f_1,\ldots,f_m$ be an orthonormal basis of $\Rsp^\perp$, with $m=p-r$, and put
$g_{ij}=f_j^\top\yt_i$. Under the assumptions below there are absolute constants $c,C,a>0$ such
that, for $n\ge C$, $m\ge C\log(1/\delta)$ and $\log(6m/\delta)\le\sqrt n$,
\begin{equation}
\inf_{b\in\Rsp^\perp\cap S^{p-1}}\hat\kappa(b)
\le -c\sqrt{\frac{\log(m\wedge n^a)}{n}}+C\frac{\log(m/\delta)}{n}.
\label{eq:depth}
\end{equation}
For $\hat\kappa_T$ the same bound holds on the no-clipping event described below; its failure
probability adds at most $2mn^{-3}$.
\end{proposition}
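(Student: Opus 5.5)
The plan is to exhibit the spurious minimum on a coordinate direction. The bound only requires an upper bound on the infimum, so it suffices to show that $\min_{j\le m'}\hat\kappa(f_j)$ is at most the right-hand side of \eqref{eq:depth}, where $m'=m\wedge\lfloor n^a\rfloor$. By Lemma~\ref{lem:complement}, $f_j^\top\yt_i=f_j^\top\tilde\varepsilon_i$. Hence the array $(g_{ij})$ is i.i.d.\ $\NPDF(0,1)$ and is independent of $z$, and the columns $g_{\cdot j}$ are mutually independent. The problem therefore reduces to the lower tail of the minimum of $m'$ independent copies of the empirical excess kurtosis of $n$ standard Gaussians.

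\textbf{Step 1 (linearization).} For a single column, write $\bar g$, $\hat m_2$ and $\hat m_4$ for the centered sample moments. I will expand
\[
\hat\kappa(f_j)=\frac{1}{n}\sum_{i}H_4(g_{ij})\cdot\frac1{\sqrt{24}}\sqrt{24}\;+\;R_j,
\]
where $H_4(x)=x^4-6x^2+3$ is the fourth Hermite polynomial and $\frac1n\sum_i H_4(g_{ij})$ is the leading term. The remainder $R_j$ collects the centering by $\bar g$ and the deviation of $\hat m_2$ from $1$ in the denominator. Each piece of $R_j$ is a product of two centered Gaussian-chaos averages, for example $(\hat m_2-1)^2$, $(\hat m_2-1)\cdot\frac1n\sum H_4$, and $\bar g^2$. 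Hanson--Wright and chaos tail bounds, combined with a union bound over the $m'$ columns, give $\max_j|R_j|\le C\log(m/\delta)/n$ with probability at least $1-\delta/3$. The hypothesis $\log(6m/\delta)\le\sqrt n$ keeps these chaos tails in their quadratic-regime form, so that a product of two $O(\sqrt{\log(m/\delta)/n})$ terms is $O(\log(m/\delta)/n)$. This step produces the second term of \eqref{eq:depth}.

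\textbf{Step 2 (moderate-deviation lower tail).} Set $S_j=n^{-1/2}\sum_i H_4(g_{ij})/\sqrt{24}$. Its summands have unit variance and only stretched-exponential moments, $\Eb e^{c|H_4|^{1/2}}<\infty$. By a Linnik-type moderate-deviation theorem (Linnik, Saulis--Statulevi\v{c}ius), for $0\le x\le n^{a}$ with some small absolute $a$,
\[
P(S_j\le -x)\ge \tfrac12\,\bar\Phi(x).
\]
I choose $x=\sqrt{\log m'}$. Then $P(S_j\le -x)\ge c_0 (m')^{-1/2}/\sqrt{\log m'}$. By independence across $j$,
\[
P\bigl(\min_j S_j>-x\bigr)\le\exp\!\bigl(-c_0\sqrt{m'/\log m'}\bigr).
\]
This is at most $\delta/3$ once $m'\ge C\log(1/\delta)$. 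That condition holds by hypothesis when $m'=m$. When $m'=n^a$, it follows from the assumption $\log(1/\delta)\le\sqrt n$ after adjusting $a$ and the constants. Combining with Step 1 gives
\[
\min_j\hat\kappa(f_j)\le -\sqrt{24}\,\sqrt{\log m'/n}+C\log(m/\delta)/n,
\]
which is \eqref{eq:depth}. The restriction to $m\wedge n^a$ is exactly the validity range of the moderate-deviation approximation.

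\textbf{Step 3 (truncated criterion).} On the event $\max_{i,j}|g_{ij}-\bar g_j|\le T$ we have $\varphi_T$ equal to the identity, so $\hat\kappa_T(f_j)=\hat\kappa(f_j)$ for every $j$. Here $g_{ij}-\bar g_j$ is Gaussian with variance at most $1\le\sigma^2$, and $T=\sigma\sqrt{8\log n}$. A Gaussian tail bound and a union bound over the $mn$ entries give failure probability at most $2mn\cdot n^{-4}=2mn^{-3}$.

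\textbf{Main obstacle.} I expect Step 2 to be the hardest. It needs a \emph{lower} bound on a moderate-deviation tail for a degree-four polynomial of Gaussians, with explicit absolute constants, uniformly for $x$ up to $n^{a}$. The standard Cram\'er theorem does not apply because $H_4(g)$ has no exponential moment. My first attempt is the Linnik zone for summands with $\Eb e^{|X|^{1/2}}<\infty$, which is valid for $x=o(n^{1/6})$. If this proves awkward, my fallback is a direct Paley--Zygmund argument: apply the Berry--Esseen bound to $S_j$ at a slightly smaller level $x$, which costs only a constant factor in $c$.
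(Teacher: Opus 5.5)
\textbf{Gap: the $\delta$ bookkeeping in Step~2.} Your architecture is the paper's: coordinate directions $f_j$ of the complement, independent $\NPDF(0,1)$ columns, the Hermite linearization $\hat\kappa(f_j)\approx\frac1n\sum_iH_4(g_{ij})$ with an $O(\log(m/\delta)/n)$ remainder, a per-column lower-tail estimate, independence across columns, and the same entrywise union bound giving $2mn^{-3}$.

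You fix $x=\sqrt{\log m'}$ independently of $\delta$, so each column succeeds with probability only about $(m')^{-1/2}$. Your failure bound $\exp(-c_0\sqrt{m'/\log m'})$ is then at most $\delta/3$ only when $m'\gtrsim\log^2(1/\delta)\log\log(1/\delta)$. That does not follow from $m'\ge C\log(1/\delta)$.

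The case $m'=n^a$ fails outright. You would need $n^{a/2}\gtrsim\log(1/\delta)\sqrt{\log n}$, but the hypothesis only gives $\log(1/\delta)\le\sqrt n$, and no $a<1/6$ reconciles these (take $m=n$ and $\log(1/\delta)=\sqrt n/2$). The cause is discarding the columns beyond $n^a$, which throws away the factor $m$ in the exponent.

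The repair is the paper's route. Keep all $m$ columns and choose the level from $m\,\bar\Phi(x)\gtrsim\log(1/\delta)$, i.e.\ $x^2\approx2\log\big(m/(C\log(6/\delta))\big)$, capped where your per-column estimate is valid. This gives $x^2\gtrsim\log(m\wedge n^a)$ when $\log(1/\delta)\le m^{1-\epsilon}$. When $\log(1/\delta)$ is of order $m$, coordinate directions give only a constant level. The paper glosses over that corner too, when it passes from $t\le\sqrt{2\log(m/(2\log(6/\delta)))}$ to $t\asymp\sqrt{\log(m\wedge n^a)}$. So either argument, as written, needs a slightly stronger hypothesis such as $\log(1/\delta)\le m^{1-\epsilon}$, or $\log(m/\log(1/\delta))$ in place of $\log m$.

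\textbf{Two smaller points.} First, $m\wedge n^a$ is not ``exactly the validity range'' of a moderate-deviation estimate. A multiplicative bound valid for $x\le n^{a}$ would allow $\log m'$ up to $n^{2a}$. The $n^a$ cap comes from the paper's Berry--Esseen route: its additive $C_{\mathrm{BE}}n^{-1/2}$ error forces $\bar\Phi(t)\gtrsim n^{-1/2}$, i.e.\ $t^2\le a\log n$. That is your fallback, and it suffices for the stated bound; moderate deviations would buy a longer range at the price of a heavier citation. Since $H_4\ge-6$, the lower tail you need already satisfies a one-sided Cram\'er condition, so Linnik's stretched-exponential hypothesis is more than required.

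Second, with $L=\log(6m/\delta)$, the uniform claim $\max_j|R_j|\le CL/n$ in Step~1 does not follow from products of maxima under $L\le\sqrt n$ alone. The upper tail of $\frac1n\sum_iH_4(g_{ij})$ carries an $L^2/n$ term, which is dominated by $\sqrt{L/n}$ only for $L\le n^{1/3}$. The $\sqrt{L/n}+L^{q/2}/n$ form you need for $q\le3$ comes from Bernstein bounds for sub-Weibull sums, as in the paper, not from Hanson--Wright or generic hypercontractivity.

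You only need the remainder at the minimizing column. With $\hat a=\hat m_2-1$ and $\hat S=\hat m_4-6\hat m_2+3$, one has $\hat\kappa=(\hat S-3\hat a^2)/(1+\hat a)^2\le\hat S+2|\hat a||\hat S|$ whenever $\hat S<0$ and $|\hat a|<1$, and the lower tail of $\hat S$ is light. The paper is equally loose at this step.
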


\begin{table}[ht]
\centering
\small
\begin{tabular}{rrrrrr}
\toprule
$p$ & $n$ & $n/p^2$ & $\varsigma_{\mathrm{KPP}}$ & gap-clean & budget \\
\midrule
1024 & 4096 & 0.004 & 0.924 & 0.905 & 0.875 \\
512 & 4096 & 0.016 & 0.868 & 0.868 & 0.868 \\
256 & 4096 & 0.062 & 0.911 & 0.911 & 0.911 \\
128 & 4096 & 0.250 & 1.098 & 1.098 & 1.098 \\
128 & 16384 & 1.000 & 0.819 & 0.786 & 0.773 \\
32 & 4096 & 4.000 & 0.908 & 0.908 & 0.906 \\
128 & 65536 & 4.000 & 0.621 & 0.621 & 0.621 \\
32 & 16384 & 16.000 & 0.711 & 0.711 & 0.712 \\
128 & 262144 & 16.000 & 0.490 & 0.490 & 0.492 \\
32 & 65536 & 64.000 & 0.610 & 0.610 & 0.612 \\
32 & 262144 & 256.000 & 0.493 & 0.493 & 0.493 \\
\bottomrule
\end{tabular}
\caption{The kurtosis threshold measured via the standard restart budget, on cells with an optimizer gap within the null tolerance, and re-searched at a \const{512}-fold larger budget. The implied crossings are within the bootstrap interval of one another.}
\label{tab:threeway}
\end{table}

\section{Proofs}
\label{app:proofs}
Throughout, $\Gt=\bS^{-1/2}\bH\bPhi$, $\Rsp=\mathrm{col}(\Gt)$, $P$ is the orthogonal projector onto
$\Rsp$, and $\yt=\Gt z+\tilde\varepsilon$ with $\tilde\varepsilon\sim\NPDF(0,I_p)$ independent of
$z$. The symbols $C,C',c$ denote absolute constants whose values change from line to line. We
assume $\sigma\ge1$ throughout ($\mathrm{Var}(u^\top\yt)\ge1$ by Lemma~\ref{lem:floor}).

\subsection{Lemma~\ref{lem:complement} (noise-only complement)}
Since $\Gt z\in\Rsp$ for every $z$, $(I-P)\Gt z=0$, and hence
\[
(I-P)\yt=(I-P)\tilde\varepsilon.
\]
Thus $(I-P)\yt$ is a function of $\tilde\varepsilon$ alone and is independent of $z$. It is Gaussian
with mean zero and covariance
\[
(I-P)I(I-P)^\top=I-P,
\]
making it standard Gaussian on $\Rsp^\perp$. Moreover,
\[
\mathrm{Cov}(P\tilde\varepsilon,(I-P)\tilde\varepsilon)=P(I-P)=0.
\]
Because the two vectors are jointly Gaussian, they are independent. Conditioning on $z$ and then
integrating establishes the independence of $P\yt$ and $(I-P)\yt$. \hfill$\square$

\subsection{Lemma~\ref{lem:dilution} (exact dilution)}
Let $u=\cos\theta\,a+\sin\theta\,b$ with $a\in\Rsp$, $b\in\Rsp^\perp$ unit vectors, and write
$X=a^\top\yt$, $Y=b^\top\yt$. By Lemma~\ref{lem:complement}, $Y\sim\NPDF(0,1)$ independently of
$X$, and
\[
u^\top\yt=\cos\theta\,X+\sin\theta\,Y.
\]
Let $v=\mathrm{Var}(X)$ and let $\mu_4(X)$ denote its fourth central moment, so
$\kappa_a=\mu_4(X)/v^2-3$. Since both variables are centred and independent,
\[
\mu_4(u^\top\yt)
=\cos^4\!\theta\,\mu_4(X)
+6\cos^2\!\theta\sin^2\!\theta\,v
+3\sin^4\!\theta ,
\]
and
\[
\mathrm{Var}(u^\top\yt)
=v\cos^2\theta+\sin^2\theta=:w.
\]
Therefore
\[
\kappa(u)+3
=
\frac{\cos^4\!\theta\big[(\kappa_a+3)v^{2}\big]
+6\cos^2\!\theta\sin^2\!\theta\,v
+3\sin^4\!\theta}{w^{2}}
=
\frac{\kappa_a v^{2}\cos^4\!\theta+3w^2}{w^{2}}
=
\kappa_a\frac{v^{2}\cos^4\!\theta}{w^{2}}+3 .
\]
Hence
\[
\kappa(u)=\kappa_a\lambda(\theta)^2,
\qquad
\lambda(\theta)=\frac{v\cos^2\!\theta}{w}\in[0,1],
\]
with $\lambda=1$ if and only if $\sin\theta=0$, and $\lambda=0$ if and only if $\cos\theta=0$.
\hfill$\square$

\paragraph{Sign dependence (Lemma~\ref{lem:dilution}).}
The identity $\kappa(u)=\kappa_a\lambda^2$ preserves the sign of $\kappa_a$ while reducing its
magnitude. If $\kappa_a<0$, the minimum over $\theta$ is attained at
$\lambda=1$. Every population minimizer in the corresponding plane lies in $\Rsp$, and restricting
the search does not change the optimum. If $\kappa_a>0$, the minimum is attained at $\lambda=0$;
the criterion is driven into $\Rsp^\perp$, where $\kappa\equiv0$ and the projection contains no
regime information.

\subsection{Lemma~\ref{lem:floor} (variance floor and uniform sub-Gaussianity)}
\emph{Variance floor.}
The variables $z$ and $\tilde\varepsilon$ are independent, so for every unit $u$,
\[
\mathrm{Var}(u^\top\yt)
=
\mathrm{Var}(u^\top\Gt z)
+\mathrm{Var}(u^\top\tilde\varepsilon)
=
\mathrm{Var}(u^\top\Gt z)+\|u\|^2
\ge1.
\]

\emph{Tails.}
Write $\bar\mu=\sum_k\pi_k\mu_k$ and
\[
X_u=u^\top(\yt-\Eb\yt)=W_u+u^\top\tilde\varepsilon,
\qquad
W_u=u^\top\Gt(z-\bar\mu).
\]
Conditionally on component $k$, $W_u\sim\NPDF(a_k,b_k)$ with
\[
a_k=u^\top\Gt(\mu_k-\bar\mu),
\qquad
b_k=u^\top\Gt\bSig_k\Gt^\top u.
\]
For every unit $u$,
\[
|a_k|\le A:=\|\Gt\|_{\mathrm{op}}\max_k\|\mu_k-\bar\mu\|,
\qquad
b_k\le B^2:=\|\Gt\|_{\mathrm{op}}^2\max_k\|\bSig_k\|_{\mathrm{op}}.
\]
Therefore, for $t>A$,
\[
\Pr(|W_u|\ge t)
\le
\sum_k\pi_k\Pr\big(|\NPDF(a_k,b_k)|\ge t\big)
\le
2\exp\!\Big(-\frac{(t-A)^2}{2B^2}\Big),
\]
which gives $\|W_u\|_{\psi_2}\le c(A+B)$. Adding the independent $\NPDF(0,1)$ term and using
subadditivity of the $\psi_2$ norm yields
\[
\|X_u\|_{\psi_2}\le c(1+A+B),
\]
equivalently the stated bound with
\[
\sigma^2 =
c\left(
1+\|\Gt\|_{\mathrm{op}}^2
\left[
\max_k\|\bSig_k\|_{\mathrm{op}}
+\max_k\|\mu_k-\bar\mu\|^2
\right]\right).
\]
Consequently, $\Eb X_u^{2q}\le C_q\sigma^{2q}$ for every fixed $q$. This bound depends only on $\|\Gt\|_{\mathrm{op}}$, $\max_k\|\bSig_k\|_{\mathrm{op}}$, and $\max_k\|\mu_k-\bar\mu\|$, none of which depend on $p$. \hfill$\square$

In the laboratory setting of Section~\ref{sec:splitting},
$\|\Gt\|_{\mathrm{op}}=1$, $\bSig_k=s^2I_r$, and
$\|\mu_k-\bar\mu\|=\Delta/2$. Hence
\[
\sigma^2=c(1+s^2+\Delta^2/4),
\]
which is exactly invariant under the $p$ and $r$ sweep.

\subsection{Theorem~\ref{thm:conc} (uniform concentration)}
Fix a linear subspace $\mathcal{S}$ with $\dim\mathcal{S}=\dsr$, let $U\subseteq\mathcal{S}\cap S^{p-1}$, write
$\Pi_{\mathcal{S}}$ for the orthogonal projector onto $\mathcal{S}$, and set
$X_i=\yt_i-\Eb\yt$, $z_i(u)=u^\top X_i$. Let $T=\sigma\sqrt{8\log n}$ and
\[
L=L_n(\delta)=\dsr\log(3n)+\log(12/\delta).
\]
We bound $\sup_{u\in U}|\hat m_{q,T}(u)-\mu_q(u)|$ for $q\in\{2,4\}$, where $\mu_q(u)=\Eb X_u^q$.

\paragraph{Step 1: truncation bias.}
By Cauchy--Schwarz and Lemma~\ref{lem:floor}, for $q\in\{2,4\}$,
\[
\big|\Eb\varphi_T(X_u)^q-\mu_q(u)\big|
\le
\Eb\big[|X_u|^q\mathbf{1}\{|X_u|>T\}\big]
\le
\big(\Eb X_u^{2q}\big)^{1/2}\Pr(|X_u|>T)^{1/2}
\le
C\sigma^{q}\,n^{-2},
\]
using $\Pr(|X_u|>T)\le 2e^{-T^2/2\sigma^2}=2n^{-4}$.
For $q=2$,
\[
\Eb\varphi_T(X_u)^2
\ge
\mu_2(u)-C\sigma^2n^{-2}
\ge
1-C\sigma^2n^{-2}.
\]

\paragraph{Step 2: empirical centring.}
Since $U\subseteq S$,
\[
\sup_{u\in U}|u^\top(\bar y_n-\Eb\yt)|
\le
\|\Pi_{\mathcal{S}}(\bar y_n-\Eb\yt)\|
=:\Delta_n.
\]
The vectors $\Pi_{\mathcal{S}}X_i$ are i.i.d., mean zero, and have
$\sigma$-sub-Gaussian one-dimensional marginals. A $1/2$-net of
$\mathcal{S}\cap S^{p-1}$ with cardinality at most $5^{\dsr}$, together with Hoeffding's inequality,
gives, with probability at least $1-\delta/6$,
\[
\Delta_n
\le
C\sigma\sqrt{\frac{\dsr+\log(6/\delta)}{n}}
\le
C\sigma\sqrt{\frac{L}{n}}.
\]
Because $\varphi_T$ is $1$-Lipschitz and
$|a^q-b^q|\le qT^{q-1}|a-b|$ for $a,b\in[-T,T]$,
\[
\Big|
\hat m_{q,T}(u)
-\tfrac1n\textstyle\sum_i\varphi_T(z_i(u))^q
\Big|
\le
4T^{3}\Delta_n
\le
C\sigma^{4}(\log n)^{3/2}\sqrt{L/n}
\]
uniformly over $u\in U$, using $T^3=\sigma^3(8\log n)^{3/2}$ and $\sigma\ge1$.

\paragraph{Step 3: net and Bernstein.}
Let $N$ be a $\gamma$-net of $U$ in the Euclidean metric with $\gamma=1/n$. Since $U$ lies in the
unit sphere of a $\dsr$-dimensional subspace, $|N|\le(3n)^{\dsr}$.
Fix $u\in N$ and $q\in\{2,4\}$. The summands
$\xi_i=\varphi_T(z_i(u))^q$ are i.i.d.\ in $[0,T^q]$ with
$\mathrm{Var}(\xi_i)\le\Eb X_u^{2q}\le C\sigma^{2q}$.
Bernstein's inequality gives, with probability at least $1-\delta'$,
\[
\Big|
\tfrac1n\textstyle\sum_i\xi_i-\Eb\xi
\Big|
\le
\sqrt{\frac{2C\sigma^{2q}\log(2/\delta')}{n}}
+
\frac{2T^{q}\log(2/\delta')}{3n}.
\]
Taking $\delta'=\delta/(6|N|)$ gives $\log(2/\delta')\le L$. A union bound over $N$ and over
$q\in\{2,4\}$ yields
\[
\max_{u\in N}
\Big|
\tfrac1n\textstyle\sum_i\varphi_T(z_i(u))^q
-\Eb\varphi_T(X_u)^q
\Big|
\le
C\sigma^{q}\sqrt{\frac{L}{n}}
+
C\,T^{q}\frac{L}{n}
\le
C\sigma^{4}
\Big[
\sqrt{\frac{L}{n}}
+
(\log n)^{2}\frac{L}{n}
\Big]
\]
with probability at least $1-\delta/3$.
For the discretization, if $\|u-u'\|\le\gamma$,
\[
|\varphi_T(z_i(u))^q-\varphi_T(z_i(u'))^q|
\le
qT^{q-1}\gamma\|\Pi_{\mathcal{S}}X_i\|.
\]
Consequently,
\[
\sup_{\|u-u'\|\le\gamma}
\Big|
\tfrac1n\textstyle\sum_i\varphi_T(z_i(u))^q
-
\tfrac1n\sum_i\varphi_T(z_i(u'))^q
\Big|
\le
\frac{4T^{3}}{n}\cdot
\frac1n\sum_i\|\Pi_{\mathcal{S}}X_i\|.
\]
The average is bounded by Bernstein's inequality:
\[
\frac1n\sum_i\|\Pi_{\mathcal{S}}X_i\|^2
\le
C\sigma^2(\dsr+\log(6/\delta))
\le
C\sigma^2L
\]
with probability $1-\delta/6$. The discretization contribution is $O\big(\sigma^4(\log n)^{3/2}\sqrt{L}/n\big)$, which is dominated by the preceding terms.

\paragraph{Step 4: assembling the moments.}
Combining Steps 1--3 gives, with probability at least $1-\delta/2$, simultaneously for
$q\in\{2,4\}$,
\begin{equation}
\sup_{u\in U}\big|\hat m_{q,T}(u)-\mu_q(u)\big|
\le
E_n
:=
C\sigma^{4}\Big[
(\log n)^{3/2}\sqrt{\tfrac{L}{n}}
+
(\log n)^{2}\tfrac{L}{n}
\Big].
\label{eq:momentbound}
\end{equation}

\paragraph{Step 5: the ratio.}
The condition
$n\ge C'\sigma^{8}(\log n)^3L$ makes $E_n\le1/8$. This gives
$\hat m_{2,T}(u)\ge1/2$ for all $u\in U$.
Writing $a=\hat m_{4,T}$, $b=\hat m_{2,T}$, $a_0=\mu_4$, and $b_0=\mu_2$,
\[
\Big|
\frac{a}{b^{2}}-\frac{a_0}{b_0^{2}}
\Big|
\le
\frac{|a-a_0|}{b^{2}}
+
a_0\frac{|b_0^{2}-b^{2}|}{b^{2}b_0^{2}}
\le
4|a-a_0|
+
C\sigma^{4}\cdot4\,(b+b_0)\,|b-b_0|
\le
C\sigma^{6}E_n,
\]
which yields the stated bound. \hfill$\square$

\paragraph{Remarks on the proof.}
(i) Truncation bounds the summands for Bernstein's inequality. Removing truncation (Theorem~\ref{thm:untrunc}) introduces a sub-Weibull tail, yielding a worse $L^2/n$ second-order term (quantified in Remark~\ref{rem:trunc}) while preserving the leading $\sqrt{\dsr/n}$ behavior.
(ii) The variance floor in Lemma~\ref{lem:floor} ensures Step~5 is uniform over the whole sphere.
(iii) The powers of $\sigma$ and logarithmic factors are not optimized and could be improved via sharper chaining.
(iv) The bound applies to the exact global minimizer of $\hat\kappa_T$. Lemma~\ref{lem:noclip} provides the condition under which this coincides with the untruncated criterion optimized numerically, evaluated in Appendix~\ref{app:fs}.

\subsection{Lemma~\ref{lem:noclip} (uniform non-clipping)}
\begin{lemma}[Uniform non-clipping]
\label{lem:noclip}
Let $\mathcal{S}\subseteq\Rb^p$ be a subspace with orthogonal projector $\Pi_{\mathcal{S}}$, and put
$z_i=\yt_i-\bar y_n$. If
\begin{equation}
\max_{i\le n}\big\|\Pi_{\mathcal{S}}z_i\big\|\;\le\;T,
\label{eq:trunccond}
\end{equation}
then $\hat\kappa_T(u)=\hat\kappa(u)$ for \emph{every} unit $u\in\mathcal{S}$.
\end{lemma}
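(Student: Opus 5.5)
The proof is a pointwise Cauchy--Schwarz argument. Once every projected centred sample lies inside $[-T,T]$, the clipping map $\varphi_T$ acts as the identity on each summand, so the truncated moments coincide with the raw ones.

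Fix a unit $u\in\mathcal S$. Since $u\in\mathcal S$, we have $\Pi_{\mathcal S}u=u$, and because $\Pi_{\mathcal S}$ is symmetric,
\[
u^\top z_i=(\Pi_{\mathcal S}u)^\top z_i=u^\top\Pi_{\mathcal S}z_i .
\]
Cauchy--Schwarz then gives
\[
|u^\top z_i|\le\|u\|\,\|\Pi_{\mathcal S}z_i\|\le T
\]
for every $i\le n$, by the hypothesis \eqref{eq:trunccond}. On $[-T,T]$ we have $\varphi_T(x)=\operatorname{sign}(x)|x|=x$. Hence $\varphi_T(u^\top z_i)=u^\top z_i$ for all $i$, and therefore $\hat m_{q,T}(u)=\frac1n\sum_i(u^\top z_i)^q=\hat m_q(u)$ for $q\in\{2,4\}$.

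The ratio in \eqref{eq:trunc} is built from identical ingredients in both criteria, so $\hat\kappa_T(u)=\hat m_4(u)/\hat m_2(u)^2-3=\hat\kappa(u)$ wherever it is defined. The only point needing care is the domain. $\mathcal D_T$ requires $\hat m_{2,T}(u)>0$, and this equals $\hat m_2(u)$, so the two criteria have the same domain within $\mathcal S$. Where both second moments vanish, both criteria are undefined together, so the identity holds as an identity of partial functions. This also matches the convention that empirical minimization is taken over $\mathcal D_T$.

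No step here is a genuine obstacle; the statement is deterministic and requires no probability. I would only remark that the centring in $z_i$ uses $\bar y_n$, exactly as in \eqref{eq:trunc}, so the certificate is sample-checkable. I would also note that the condition is a uniform bound over $\mathcal S$ obtained from a single norm computation per sample, which is why it applies to every $u$ simultaneously rather than pointwise.
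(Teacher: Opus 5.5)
Your proof is correct and follows the paper's argument: you write $u^\top z_i = u^\top \Pi_{\mathcal S} z_i$, bound it by $\|\Pi_{\mathcal S} z_i\|\le T$ via Cauchy--Schwarz, and conclude that $\varphi_T$ is the identity on all $n$ projections, so $\hat\kappa_T(u)=\hat\kappa(u)$. Your added remark that $\hat m_{2,T}=\hat m_2$ on $\mathcal S$, so both criteria have the same domain, is a small clarification that the paper leaves implicit.
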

\emph{Proof.}
For every unit $u\in\mathcal{S}$, $|u^\top z_i| = |u^\top\Pi_{\mathcal{S}}z_i| \le \|\Pi_{\mathcal{S}}z_i\| \le T$. Thus $\varphi_T$ is the identity on all $n$ projections, and $\hat\kappa_T(u)=\hat\kappa(u)$. \hfill$\square$

\subsection{Theorem~\ref{thm:untrunc} (the untruncated criterion)}
Write $X_i=\yt_i-\Eb\yt$, $X_{u,i}=u^\top X_i$, and $\mu_q(u)=\Eb X_u^q$.
Let $\tilde m_q(u)=\frac1n\sum_iX_{u,i}^{q}$ and $\hat m_q(u)=\frac1n\sum_i(u^\top(\yt_i-\bar y_n))^q$.
Put $L=L_n=\dsr\log(3n)+\log(18/\delta)$.

\paragraph{Step 0: tail class of $X_u^q$.}
For a unit $u\in\mathcal{S}$, Lemma~\ref{lem:floor} gives $\|X_u\|_{\psi_2}\le C\sigma$.
Using the identity $\|Z^{q}\|_{\psi_{\alpha/q}}=\|Z\|_{\psi_\alpha}^{q}$, $X_u^q$ is sub-Weibull with shape $\alpha=2/q$ and $\|X_u^q\|_{\psi_{2/q}}\le C\sigma^q$.

\paragraph{Step 1: pointwise concentration.}
Applying Theorem~3.1 of \citet{kuchibhotla2022moving} to $\xi_i=X_{u,i}^{q}-\mu_q(u)$ yields, with probability at least $1-2e^{-t}$,
\begin{equation}
\big|\tilde m_q(u)-\mu_q(u)\big|
\le
C\sigma^{q}\Big[
\sqrt{\tfrac{t}{n}}
+
\tfrac{t^{q/2}}{n}
\Big].
\label{eq:subweibull}
\end{equation}
For $t\ge1$ and $\sigma\ge1$, all three values of $q\in\{2,3,4\}$ are bounded by $C\sigma^{4}\big[\sqrt{t/n}+t^{2}/n\big]$.

\paragraph{Step 2: net.}
Let $N$ be a $\gamma$-net of $U$ with $\gamma=1/n$. Since $|N|\le(3n)^{\dsr}$, applying \eqref{eq:subweibull} at each $u\in N$ and each $q\in\{2,3,4\}$ yields, with probability at least $1-\delta/3$,
\begin{equation}
\max_{u\in N}\max_{2\le q\le4}
\big|\tilde m_q(u)-\mu_q(u)\big|
\le
C\sigma^{4}
\Big[
\sqrt{\tfrac{L}{n}}
+
\tfrac{L^{2}}{n}
\Big]
=:
E_n .
\label{eq:rawdev}
\end{equation}

\paragraph{Step 3: extension from the net to $U$.}
The mean-value theorem applied to $x\mapsto x^q$ gives
\begin{equation}
\big|X_{u,i}^{q}-X_{u',i}^{q}\big|
\le
q\,\big|X_{u,i}-X_{u',i}\big|\,
\max\big(|X_{u,i}|,|X_{u',i}|\big)^{q-1}.
\label{eq:mvt}
\end{equation}
A $1/2$-net of $\mathcal{S}\cap S^{p-1}$ gives, with probability at least $1-\delta/6$,
\begin{equation}
\max_{i\le n}\|\Pi_{\mathcal{S}}X_i\|
\le
C\sigma\Big(\sqrt{\dsr}+\sqrt{\log(6n/\delta)}\Big)
\le
C\sigma\sqrt{L}.
\label{eq:maxnorm}
\end{equation}
The resulting discretization error is $4\gamma\max_i\|\Pi_{\mathcal{S}}X_i\|^{4} \le C\sigma^{4}L^{2}/n$, which extends \eqref{eq:rawdev} to all of $U$.

\paragraph{Step 4: sample centring.}
Let $\Delta(u)=u^\top(\bar y_n-\Eb\yt)=\tilde m_1(u)$.
With probability at least $1-\delta/6$,
\begin{equation}
\sup_{u\in U}|\Delta(u)|
\le
C\sigma\sqrt{L/n}.
\label{eq:deltabound}
\end{equation}
Exact binomial identities give $\hat m_2=\tilde m_2-\Delta^{2}$ and $\hat m_4=\tilde m_4-4\Delta\tilde m_3+6\Delta^{2}\tilde m_2-3\Delta^{4}$.
Substituting the moment bounds yields
\[
\sup_{u\in U}|\hat m_q(u)-\mu_q(u)|
\le
CE_n,
\qquad q\in\{2,4\}.
\]

\paragraph{Step 5: the ratio.}
The condition $n\ge C'\sigma^{8}L^{2}$ makes $CE_n\le1/2$.
With $a=\hat m_4$ and $b=\hat m_2$, $b\ge 1/2$ and $b\le C\sigma^{2}$ uniformly. Consequently,
\[
\Big|
\frac{a}{b^{2}}-\frac{a_0}{b_0^{2}}
\Big|
\le
4|a-a_0|
+
C\sigma^{4}\cdot C\sigma^{2}\cdot4|b-b_0|
\le
C\sigma^{6}E_n,
\]
which yields \eqref{eq:untrunc}. The statement for $\hat\kappa_{\mathrm{impl}}$ follows from $\hat\kappa_{\mathrm{impl}}+3 = ((n-1)/n)^2(\hat\kappa+3)$. \hfill$\square$

\paragraph{Second-order term and scope.}
The leading term $\sqrt{L/n}$ avoids the $(\log n)^{3/2}$ truncation factor of Theorem~\ref{thm:conc}. The second-order term $L^{2}/n$ cannot generally be replaced by one linear in $L$: for $\mathcal{S}=\Rb^{p}$, a single observation yields $\sup_{S^{p-1}}|\hat\kappa-\kappa|\ge c\,m^{2}/n$ (Remark~\ref{rem:trunc}). The truncated criterion removes this witness, permitting the linear-in-$L$ second term in Theorem~\ref{thm:conc}.

For $\mathcal{S}=\Rsp$, Theorem~\ref{thm:untrunc} covers the implemented criterion without a clipping certificate. For $\mathcal{S}=\Rb^{p}$, the sufficient sample size for an angular guarantee becomes quadratic in $\dsr$. Substituting \eqref{eq:untrunc} into the argument of Corollary~\ref{cor:nstar} requires:
\begin{equation}
n\;\ge\;\frac{C\sigma^{20}V^{2}L_n}{\kappa_\star^{2}\sin^{4}\theta_0}
\quad\text{and}\quad
n\;\ge\;\frac{C\sigma^{10}VL_n^{2}}{|\kappa_\star|\sin^{2}\theta_0},
\label{eq:nstaruntrunc}
\end{equation}
whose ratio is $2C\sigma^{10}V / (|\kappa_\star|\sin^{2}\theta_0L_n)$. For the untruncated criterion, the ratio exceeds $1$ only while $L_n\lesssim\sigma^{10}V$. Removing truncation therefore costs a factor $\dsr$ in the sufficient sample size.

\subsection{Lemma~\ref{lem:growth} (global quadratic growth)}
Let $\Gt$ have orthonormal columns, $\bSig_1=\bSig_2=s^2I_r$, $\mu_1-\mu_2=\Delta e_1$, $\pi_1=\pi_2=1/2$, and put $\beta=s^2+\Delta^2/4$ and $V=1+\beta$.
For a unit $u$, write $g=\Gt^\top u\in\Rb^r$. The projection $u^\top\yt$ is a balanced two-component Gaussian mixture with mean gap $\Delta g_1$ and total variance
\begin{equation}
V(u)=1+s^2\|g\|^2+\Delta^2g_1^2/4 .
\label{eq:Vu}
\end{equation}
Setting $p=q=1/2$ and $v_1=v_2$ in Eq.~\eqref{eq:kappa} gives
\begin{equation}
\kappa(u)
=
-\frac{\Delta^{4}g_1^{4}}{8\,V(u)^{2}} .
\label{eq:kappalab}
\end{equation}
Write $t=g_1^2$ and $w=\|g\|^2$. Then
\[
|\kappa|
=
\frac{\Delta^4t^2}
{8(1+s^2w+\Delta^2t/4)^2}
\]
is strictly increasing in $t$ and strictly decreasing in $w$. It is maximized at $t=w=1$, yielding $u_\star=\pm\Gt e_1$ and $\kappa_\star=-\Delta^4/(8V^2)$. Because $\Gt^\top\Gt=I_r$, $t=\cos^2 d(u,u_\star)$.

Using $w\ge t$ in the denominator of \eqref{eq:kappalab},
\[
\kappa(u)
\ge
-\frac{\Delta^{4}t^{2}}{8(1+\beta t)^{2}}
=
-\frac{\Delta^4}{8}h(t)^2,
\qquad
h(t):=\frac{t}{1+\beta t}.
\]
Hence
\[
\kappa(u)-\kappa(u_\star)
\ge
\frac{\Delta^4}{8}\cdot
\frac{(1-t)\,(1+t+2\beta t)}
{(1+\beta)^{2}(1+\beta t)^{2}} .
\]
The function $\phi(t)=\frac{1+t+2\beta t}{(1+\beta t)^{2}}$ attains its minimum on $[0,1]$ at an endpoint: $\min\{\phi(0),\phi(1)\} = \min\{1,\,2/(1+\beta)\}$. Since $1-t=\sin^2 d(u,u_\star)$,
\[
\kappa(u)-\kappa(u_\star)
\ge
|\kappa_\star|
\min\!\big(1,2/V\big)
\sin^{2}d(u,u_\star),
\]
establishing Eq.~\eqref{eq:growth}. As $d\to0$, $\phi(t)\to2/(1+\beta)$, matching the local curvature $2|\kappa_\star|/V$ obtained by expanding Lemma~\ref{lem:dilution} about $\theta=0$. \hfill$\square$

\subsection{Corollary~\ref{cor:nstar}}
Let $\eta=\sup_{u\in U}|\hat\kappa_T(u)-\kappa(u)|$, let $\hat u\in\arg\min_U\hat\kappa_T$, and let $u_\star\in\arg\min_U\kappa$ with $u_\star\in U$.
Because $u_\star$ is a population object and $\hat\kappa_T$ is a sample statistic, a two-sided deviation bound is required:
\[
\kappa(\hat u)-\kappa(u_\star)
=
[\kappa(\hat u)-\hat\kappa_T(\hat u)]
+
[\hat\kappa_T(\hat u)-\hat\kappa_T(u_\star)]
+
[\hat\kappa_T(u_\star)-\kappa(u_\star)]
\le
2\eta .
\]
Combining this with Lemma~\ref{lem:growth} gives
\[
\sin^2 d(\hat u,u_\star)
\le
\frac{2\eta}
{|\kappa_\star|\min(1,2/V)}.
\]
For $V\ge2$, $d(\hat u,u_\star)\le\theta_0$ follows whenever $\eta\le\tau:=|\kappa_\star|\sin^{2}\theta_0/V$. Theorem~\ref{thm:conc} bounds $\eta$ by two terms. Bounding each by $\tau/2$ gives:
\begin{align}
\text{(i)}\quad
C\sigma^{10}(\log n)^{3/2}\sqrt{L_n/n}\le\tfrac{\tau}{2}
&\iff
n\ \ge\
\frac{4C^{2}\sigma^{20}(\log n)^{3}L_n}{\tau^{2}},
\label{eq:corT1}\\[2pt]
\text{(ii)}\quad
C\sigma^{10}(\log n)^{2}L_n/n\le\tfrac{\tau}{2}
&\iff
n\ \ge\
\frac{2C\sigma^{10}(\log n)^{2}L_n}{\tau}.
\label{eq:corT2}
\end{align}
Condition \eqref{eq:corT1} establishes Eq.~\eqref{eq:nstar}. Condition \eqref{eq:corT1} implies \eqref{eq:corT2} whenever $(2C\,\sigma^{10}\,V\log n) / (|\kappa_\star|\sin^{2}\theta_0) \ge1$, which holds under the stated conditions ($\sigma\ge1, V\ge2, |\kappa_\star|\le2, n\ge3$). Thus Eq.~\eqref{eq:nstar} controls both terms. \hfill$\square$

\subsection{Proposition~\ref{prop:degen} (exact degeneracy)}
\paragraph{The empirical floor.}
For any sample and unit $u$, write $x_i=u^\top(\yt_i-\bar y_n)$. Cauchy--Schwarz gives $\frac1n\sum x_i^4 \ge (\frac1n\sum x_i^2)^2$, so $\hat\kappa(u)\ge-2$. Equality requires the $x_i$ to take the two values $\pm c$ in equal numbers, requiring $n$ to be even.

\paragraph{Attainment in the complement.}
Let $w_i=(I-P)\yt_i$. By Lemma~\ref{lem:complement}, these are i.i.d.\ $\NPDF(0,I_m)$ on $\Rsp^\perp$, where $m=p-r$. Let $W_c$ be the $n\times m$ matrix of centred $w_i$. When $m\ge n-1$, its image is exactly the sum-zero hyperplane $\mathcal{H}$.

Choose a balanced $\varepsilon\in\{\pm1\}^n$. Since $\varepsilon\in\mathcal{H}$, there exists $b_0\in\Rsp^\perp$ satisfying $W_cb_0=\varepsilon$. Set $b=b_0/\|b_0\|$. The centred projections along $b$ are $\varepsilon/\|b_0\|$, which are two-valued and symmetric, yielding $\hat\kappa(b)=-2$. The projections satisfy:
\begin{equation}
1/\|b_0\|
\le
\|W_c\|_{\mathrm{op}}/\sqrt n .
\label{eq:witnessproj}
\end{equation}
For the truncated criterion, the conclusion requires $\|W_c\|_{\mathrm{op}}\le T\sqrt n$. The Gaussian operator-norm bound ensures this holds with probability at least $1-\delta$ when $m\le c\,\sigma^{2}n\log n$. The untruncated degeneracy applies unconditionally.

Because $b\in\Rsp^\perp$, Lemma~\ref{lem:complement} yields $\kappa(b)=0$ and $\langle b,u_\star\rangle=0$. The vector $(u_\star^\top(\yt_i-\bar y_n))_i$ has a continuous density on $\mathcal{H}$, making $\hat\kappa(u_\star)>-2$ almost surely. For odd $n$, the balanced vector with one zero entry yields $\hat\kappa(b)=-2+1/(n-1)$.

\paragraph{Non-degeneracy of the restricted criterion.}
The centred projections attainable from $\Rsp$ form the image of an $r$-dimensional space. For $r\le n-2$, this $r$-dimensional random subspace lies in general position. The intersection with the rays defined by the sign patterns $\varepsilon$ has probability zero. \hfill$\square$

\subsection{Proposition~\ref{prop:depth} (spurious depth)}
Write $g_{ij}=f_j^\top\yt_i$. By Lemma~\ref{lem:complement}, $g_{ij}$ are i.i.d.\ $\NPDF(0,1)$ over both indices. Put $L=\log(6m/\delta)$.

\paragraph{Step 0: truncation.}
Since $g_{ij}-\bar g_j$ is Gaussian with variance $1-1/n\le1$,
$\Pr(|g_{ij}-\bar g_j|>T)\le 2n^{-4}$ for $\sigma\ge1$. A union bound over the $nm$ centred projections yields $\hat\kappa_T(f_j)=\hat\kappa(f_j)$ simultaneously for all $j$, with failure probability at most $2mn^{-3}$.

\paragraph{Step 1: raw moments.}
Define $\tilde m_q(f_j)=\frac1n\sum_i g_{ij}^{\,q}$ and $\tilde\kappa_j = \tilde m_4(f_j)/\tilde m_2(f_j)^2-3$. Each $\tilde m_q(f_j)$ is an average of $n$ i.i.d.\ terms, and the columns $j$ are independent.

\paragraph{Step 2: empirical centring.}
Expanding $(g_{ij}-\bar g_j)^q$ provides the identities for sample-centred moments $\hat m_2$ and $\hat m_4$. A union bound gives $\max_j|\bar g_j|\le \beta:=\sqrt{2L/n}$ with probability $1-\delta/6$. Applying Theorem~3.1 of \citet{kuchibhotla2022moving} yields:
\begin{equation}
\max_j|\tilde m_2-1|
\le
C\sqrt{L/n}+CL/n,\qquad
\max_j|\tilde m_3|
\le
C\sqrt{L/n}+CL^{3/2}/n.
\label{eq:rawmoments}
\end{equation}
Assuming $L=\log(6m/\delta)\le\sqrt n$, the second terms in \eqref{eq:rawmoments} are dominated. This gives $|\hat m_2-\tilde m_2|\le\beta^{2}$ and $|\hat m_4-\tilde m_4|\le C\beta^{2}$. Since $x\mapsto a/x^2$ is Lipschitz,
\begin{equation}
\max_{j\le m}\big|\hat\kappa(f_j)-\tilde\kappa_j\big|
\le
\frac{2CL}{n}.
\label{eq:centcost}
\end{equation}

\paragraph{Step 3: Hermite expansion of the raw statistic.}
Putting $\tilde a_j=\tilde m_2(f_j)-1$ and $\tilde S_j=\frac1n\sum_i(g_{ij}^4-6g_{ij}^2+3)$ gives $\tilde m_4=3+6\tilde a_j+\tilde S_j$. Therefore $\tilde\kappa_j = \tilde S_j-3\tilde a_j^{2}-2\tilde a_j\tilde S_j+O(\tilde a_j^{3})$. By \eqref{eq:rawmoments}, $\max_j|\tilde\kappa_j-\tilde S_j|\le CL/n$.

\paragraph{Step 4: lower tail.}
The variable $H_4(g)$ has mean $0$, variance $24$, and is Cram\'er-regular. Berry--Esseen bounds for $\tilde S_j$ give $\Pr(\tilde S_j\le-t\sqrt{24/n}) \ge \Phi(-t)-C_{\mathrm{BE}}n^{-1/2}$. Independence of the columns gives $\Pr\big(\min_j\tilde S_j>-t\sqrt{24/n}\big) \le \exp(-m\Phi(-t)/2)$, which is at most $\delta/6$ for $t\le \sqrt{2\log\!\big(m/(2\log(6/\delta))\big)}$.

\paragraph{Step 5: combination.}
On the intersection of the events,
\[
\inf_{b\in\Rsp^{\perp}\cap S^{p-1}}\hat\kappa_T(b)
\le
\min_{j\le m}\tilde S_j + O(L/n).
\]
The lower-tail bound $\min_j\tilde S_j \le -t\sqrt{24/n}$ with $t \asymp\sqrt{\log(m\wedge n^a)}$ establishes Eq.~\eqref{eq:depth}. \hfill$\square$

\subsection{Remark~\ref{rem:trunc} (necessity of truncation)}
\begin{remark}[The truncation level is ambient-dependent]
\label{rem:trunc}
Let $z_i=(I-P)(\yt_i-\bar y_n)$ and let $i_0$ maximise $\|z_i\|$. Taking $b$ along $z_{i_0}$ gives $\hat\kappa(b)+3\ge\|z_{i_0}\|^4/(n\lambda_{\max}^2)$, yielding $\sup_{S^{p-1}}|\hat\kappa-\kappa|\ge c\,m^2/n-3$ with high probability for $m\le n$. A single observation breaks the untruncated criterion as $p$ grows (evaluated in Appendix~\ref{app:lb}).
\end{remark}

\subsection{Scope and limitations of the theoretical results}
\label{app:population}
Propositions~\ref{prop:degen} and~\ref{prop:depth} identify an ambient-dimension-dependent failure mode specific to minimum-kurtosis projection pursuit over $S^{p-1}$. Lemma~\ref{lem:growth} and Corollary~\ref{cor:nstar} assume balanced weights and equal spherical component covariances, which Lemmas~\ref{lem:complement}--\ref{lem:floor} and Theorem~\ref{thm:conc} do not require. Without global growth, objective concentration holds but yields only the local expansion of Lemma~\ref{lem:dilution}.

The theory bounds a sufficient regime and a failure regime. Corollary~\ref{cor:nstar} establishes sufficiency for $n\gtrsim\dsr\,\mathrm{polylog}$. Proposition~\ref{prop:degen} establishes failure for even $n\le p-r+1$, and Proposition~\ref{prop:depth} establishes a spurious depth of $\asymp\sqrt{\log(m\wedge n^{a})/n}$ for $\hat\kappa_T$ under $m\le n^2$. The regime beyond $n=p$ is characterized by the measured depth law and certificates. The theoretical exponent in \eqref{eq:nstar} is linear in $\dsr$ (up to logarithmic factors), which is compatible with the steeper measured discovery thresholds over the finite tested range due to the unevaluated absolute constant.

Lemmas~\ref{lem:complement}--\ref{lem:floor}, Theorem~\ref{thm:conc}, and Propositions~\ref{prop:degen}--\ref{prop:depth} hold for general $M$-component mixtures. Proposition~\ref{prop:degen} requires a unique population minimizer inside $\Rsp$. The explicit minimizer, Lemma~\ref{lem:growth}, and Corollary~\ref{cor:nstar} are stated for $M=2$, and for $M>2$ the minimum-kurtosis direction need not be unique. All measurements reported in this paper use $M=2$.

\section{Numerical checks}
\label{app:fs}
\paragraph{(A) Lemma~\ref{lem:growth}.}
The inequality \eqref{eq:growth} is evaluated exactly in closed form over \vFSGrowthConfigs{} configurations spanning $p\in\{8,16,32,64\}$, $r\in\{2,4,8,16\}$, $\Delta\in\{0.8,1.6,3.0\}$, and $s\in\{0.25,0.5,1.0\}$. The inequality holds throughout all configurations. The smallest slack is \vFSGrowthSlack, occurring at $d=0$ where it is an equality.

\paragraph{(B) Theorem~\ref{thm:conc}.}
The uniform deviation $\eta(\dsr,n)$ is estimated using $4000$ random directions followed by gradient ascent on $|\hat\kappa_T-\kappa|$ from the $24$ best directions, separately over $S^{p-1}$ and $\Rsp\cap S^{p-1}$. The configurations use $r=4$, $p\in\{8,16,32,64\}$, and $n \in [400, 25600]$ over 8 seeds. The fitted decay in $n$ is $\vFSSlopeFull$ $[\vFSSlopeFullLo,\vFSSlopeFullHi]$ over the full sphere and $\vFSSlopeRow$ $[\vFSSlopeRowLo,\vFSSlopeRowHi]$ over the signal subspace, matching the $-1/2$ exponent of the leading term in Eq.~\eqref{eq:conc}.

The dependence on ambient dimension is $\vFSSlopeFullInP$ $[\vFSSlopeFullInPLo,\vFSSlopeFullInPHi]$ for the full sphere and $\vFSSlopeRowInP$ $[\vFSSlopeRowInPLo,\vFSSlopeRowInPHi]$ for the restricted set (Figure~\ref{fig:theory}). At fixed $r$, the restricted deviation does not depend on $p$, as predicted. The measured unrestricted exponent crosses over from $\vFSSlopeFullInPSmallN$ at $n=\vFSNSmallN$ to $\vFSSlopeFullInPLargeN$ at $n=\vFSNLargeN$, consistent with the transition from $\dsr^{1}$ to $\dsr^{1/2}$ growth in Eq.~\eqref{eq:conc}.

\paragraph{(B$'$) Theorem~\ref{thm:untrunc}.}
Evaluated without truncation on the restricted search set, the fitted decay in $n$ is $\vTwoSlopeN$ $[\vTwoSlopeNLo,\vTwoSlopeNHi]$ and the dependence on $p$ is $\vTwoSlopeP$ $[\vTwoSlopePLo,\vTwoSlopePHi]$. The no-clipping condition holds throughout this grid, making the truncated and untruncated objectives pointwise identical here.

\paragraph{(C) Truncation certificate.}
The ratio $\max_i\|\Pi_{\mathcal S}z_i\|/T$ from Lemma~\ref{lem:noclip} never exceeds $\vFSNoClipRow{}$ (median $\vFSNoClipRowMed{}$) on the restricted search. The truncated and untruncated criteria are therefore identical on all $\vFSNoClipCells{}$ cells. On the full sphere, the ratio reaches $\vFSNoClipFull{}$, and the certificate fails for $p \ge \vFSNoClipFailP$.

Under the plug-in variance convention, $\hat\kappa_{\mathrm{impl}}(u)+3 = \big(\tfrac{n-1}{n}\big)^2 \big(\hat\kappa_T(u)+3\big)$. The minimizer and direction ordering remain identical, with numerical values differing by $O(1/n)$.

\begin{figure}[t]
\centering
\includegraphics[width=0.72\textwidth]{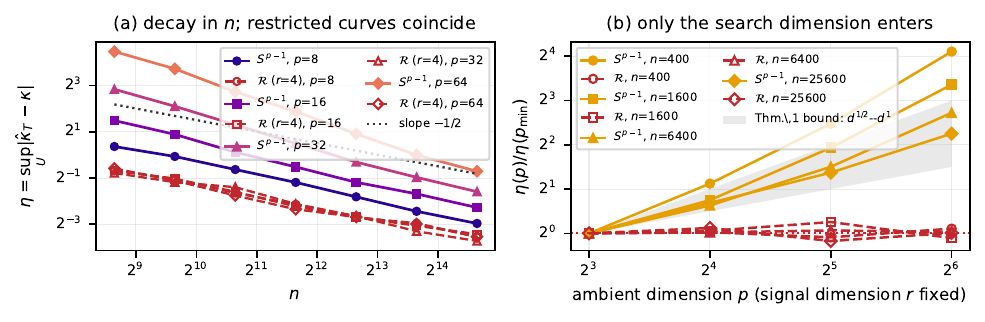}
\caption{Direct evaluation of Theorem~\ref{thm:conc}. (a) Uniform deviation $\eta=\sup_{u\in U}|\hat\kappa_T(u)-\kappa(u)|$ against $n$ for the unrestricted search at several $p$ (solid) and the operator-subspace restricted search at $r=4$ (dashed). (b) The same data against $p$ at fixed $n$; the shaded band spans the $\dsr^{1/2}$ to $\dsr^{1}$ dependence allowed by Eq.~\eqref{eq:conc}.}
\label{fig:theory}
\end{figure}

\section{Numerical checks for the lower bounds}
\label{app:lbchecks}
\paragraph{The two-point divergence.}
After whitening the observation law is $\tfrac12 N(+a,I)+\tfrac12 N(-a,I)$ and the divergence is exactly two-dimensional. Writing $\alpha=\|a\|$ and expanding $\log\cosh$ gives $\mathrm{KL}=\tfrac12\alpha^4\sin^2(2\theta_0)+O(\alpha^6)$, confirmed by quadrature ($\mathrm{KL}/\alpha^4\to\vkKLRatio$ against the predicted $\vkKLPredicted$). Since $\kappa_\star=-2\alpha^4/(1+\alpha^2)^2$, the divergence is $\Theta(|\kappa_\star|)$.

\label{app:lb}

\paragraph{(A) Proposition~\ref{prop:degen}.}
The witness in Eq.~\eqref{eq:degen} is constructed by solving $W_cb_0=\varepsilon$ for a balanced sign vector and normalizing. Over \vLBLOneCells{} configurations in the regime $r+2\le n\le p-r+1$ (even $n$, $p \in [64, 512]$), the witness reaches the floor $\hat\kappa=-2$ to $\vLBLOneFloorGap$, leaves $\vLBLOneLeak$ inside $\Rsp$, and strictly yields lower empirical kurtosis than $\hat\kappa(u_\star)$. Outside this regime, the construction leaves a median gap of $\vLBLOneGapOut$ to the floor.

\paragraph{(B) Depth law.}
The depth $D(m,n)=-\inf_b\hat\kappa(b)$ on pure $\NPDF(0,I_m)$ data is estimated by multi-start descent (\const{24} restarts $\times$ \const{400} iterations) over $m\in[4,252]$, $n\in[400,25600]$, and $\vLBDepthSeeds{}$ seeds.

Fitting $\log D=\log C+a\log m+b\log n$ over all $\vLBDepthCells{}$ measurements yields $a=\vLBDepthAllExpM$ and $b=\vLBDepthAllExpN$. Because $D$ is bounded by the hard floor $2$, $\vLBFitDropped{}$ measurements saturate above $\vLBFitThresh$. Refitting the $\vLBFitCells{}$ saturation-free measurements yields:
\begin{equation}
a=\vLBFitExpM\;[\vLBFitExpMLo,\vLBFitExpMHi],\qquad
b=\vLBFitExpN\;[\vLBFitExpNLo,\vLBFitExpNHi].
\label{eq:depthfit}
\end{equation}
The $n$-exponent is shallower than $-1/2$, matching the effective exponent $-1/2+1/(2\log n)$ from the $\sqrt{\log n}$ covering factor in Theorem~\ref{thm:conc}, which ranges from $\vLBLogCorrLo$ to $\vLBLogCorrHi$ over this grid. Pinning the exponents yields the floor-aware constant $D=\vLBFitC\sqrt{m/n}$ ($C\in[\vLBFitCLo,\vLBFitCHi]$). Equation~\eqref{eq:law} and the rescaled certificate cells use this constant rather than the biased all-cell value. Refitting with $\log\log m$ in place of $\log m$ gives $R^2=\vkDepthRsqLogLogM$ against $\vkDepthRsqLogM$, so the $m$-dependence is not identified over the tested range.

\paragraph{(C) Certificates.}
At each $(p,n)$, the constructed witness $b\in\Rsp^\perp$ is compared against the population optimum $\hat\kappa(u_\star)$ and the best empirical minimum found over $\Rsp$. The restricted reference uses \const{5}-start Adam checked against a dense evaluation of \const{400000} quasi-uniform directions polished locally (median gap $\vOptGapMed$, worst gap $\vOptGapMax$ over $\vOptCells{}$ cells).

Both searches use \const{24} restarts $\times$ \const{400} iterations. Below the boundary in \eqref{eq:law}, the certificate rate is $\vLBBandBelowRate{}$ over $\vLBBandBelowCells{}$ cells ($\vLBBandBelowDraws{}$ seed draws; exact 95\% interval $[\vLBBandBelowLo,\vLBBandBelowHi]$). Within a factor of two above the boundary it is $\vLBBandNearRate{}$ over $\vLBBandNearCells{}$ cells ($[\vLBBandNearLo,\vLBBandNearHi]$). Beyond that it is $\vLBBandAboveRate{}$ ($[\vLBBandAboveLo,\vLBBandAboveHi]$). Every seed yields a certificate up to $n/n_\times=\vLBBandBelowMax$, and none yields a certificate from $n/n_\times=\vLBBandAboveMin$.

\paragraph{(D) Remark~\ref{rem:trunc}.}
The largest-observation direction reaches $\hat\kappa=\vLBOutlierMax$ at $p=\vLBOutlierP$ against a population value of $0$. A single observation dictates the untruncated criterion at high $p$.

\section{Experimental details}
\label{app:details}
\paragraph{Controlled laboratory (LAB).}
$\Gt$ is drawn with orthonormal columns by QR factorization of a Gaussian matrix. Component means are placed on orthogonal latent axes at radius $\Delta/2$, and component covariances are $s_k^2I$. The projected separation, Bayes error, $\kappa$, and sub-Gaussian constant $\sigma$ are strictly independent of $p$ and $r$. All $\vLabCells{}$ factorial cells share identical population values. The population Bayes error of this configuration is $\vkBayesPop$ and $|\kappa_\star|=\vkKappaStar$.

\paragraph{Physical benchmark (PB).}
The physical benchmark uses a blur-type operator with $\mathrm{cond}(\Gt)\approx10^4$ and $\bK\ne0$. It uses $N=\const{3200}$ training points, $n_{\text{test}}=\const{2000}$ held out points, $\sigma_y=\const{0.05}$, $M=\const{2}$ regimes, and 5 seeds.

\paragraph{Discovery threshold.}
The discovery threshold is $n^\star(\epsilon,\delta) = \min\{n:\Pr_{\text{seed}}(\text{excess}\le\epsilon)\ge1-\delta\}$ on a $\log_2$ grid. The primary setting is $\epsilon=\const{0.01}$ and $\delta=\const{0.2}$.

\paragraph{Model comparison.}
Models are compared by BIC with bootstrap intervals on the exponents and residual inspection. The candidate $n^\star\sim r/\kappa^2$ is omitted from the factorial because $\kappa$ is fixed by construction, making $-\log\kappa^2$ collinear with the intercept.

\paragraph{Label-free permutation.}
Learned components are matched to true regimes by the Hungarian algorithm using $\|\Gt\hat\mu_j-\Gt\mu_k\|^2$ on the training split only. Paired comparisons use exact McNemar tests with discordant counts pooled over seeds.

\paragraph{Appended-noise control.}
The base problem uses $p_0=\const{16}$, $r=\const{4}$, $\Delta=\const{1.6}$, $s=\const{0.5}$, and $q\in\{0,16,48,112,240\}$ appended $\NPDF(0,1)$ coordinates. The appended block is drawn once per seed and truncated, leaving the first $p_0$ coordinates and exact Bayes decisions identical across $q$.

\paragraph{Subspace misspecification.}
The base configuration is $p=\const{32}, r=\const{4}$ over 10 seeds. The perturbed operator is $\Gt+\rho E$ for a rescaled Gaussian $E$ satisfying $\|\rho E\|_F=\rho\|\Gt\|_F$, evaluated at $\rho\in\{0.02,0.05,0.1,0.2,0.4,0.8,1.6\}$. The data-driven variant uses the top-$r$ eigenvectors of the training empirical covariance.

\paragraph{Nonlinear benchmark.}
The problem $-\Delta u+\kappa u^3=a$ is evaluated on a $21\times21$ grid with $\kappa=\const{40}$. The posterior uses a per-regime linearization as a self-normalized importance proposal.

\section{Information used by each method}
\label{app:info}
The known forward model (operator $\bH$, decoder basis $\bPhi$, sensor noise $\sigma$, and field covariance $\bK$) provides the whitened geometry. The true generative parameters $(\pi,\mu,\bSig)$ are supplied only to the exact Bayes reference oracle. The test labels are withheld from all methods during fitting and component matching. The full-space/restricted comparison isolates operator knowledge, as $\Rsp=\mathrm{col}(\Gt)$ is a deterministic function of the known forward model.

\begin{table}[ht]
\centering
\small
\begin{tabular}{lccccc}
\toprule
 & fwd.\ model & regimes & subspace & true & \\
method & $\bH,\bPhi,\sigma,\bK$ & $M$ & $\Rsp$ & $(\pi,\mu,\bSig)$ & labels \\
\midrule
oracle (control) & yes & yes & yes & \textbf{yes} & no \\
latent-space $k$-means & yes & yes & not used & no & no \\
kurtosis, full space & yes & yes & not used & no & no \\
kurtosis, operator-subspace & yes & yes & \textbf{used} & no & no \\
kurtosis, PCA-restricted & yes & yes & \textbf{estimated} & no & no \\
kurtosis, PCA, $\hat r$ from MP & \textbf{no} & yes & \textbf{estimated} & no & no \\
second-moment spectral & yes & yes & not used & no & no \\
multi-start EM / annealing & yes & yes & not used & no & no \\
\bottomrule
\end{tabular}
\caption{Train-time information available to each method.}
\label{tab:info}
\end{table}

\section{Optimization budget}
\label{app:budget}
The kurtosis search uses Adam on the unit sphere with a fixed budget: $\vProtNstarRestarts{}$ random starts $\times$ \const{250} iterations for both full-space and operator-subspace restricted searches at every $p$. The number of objective evaluations is matched rather than wall-clock time.

Varying the random starts over $\{\vBudLo,24,\vBudHi\}$ at $p\in\{6,12,24,48\}$, $r=2$, and 8 seeds yields identical discovery thresholds across budgets. The mean angular change from the smallest to the largest budget is $\vBudFullDelta^\circ$ for the unrestricted search (Wilcoxon $p=\vBudFullP$) and $\vBudRowDelta^\circ$ for the restricted search ($p=\vBudRowP$). The restart count has no systematic effect on angular error over this $\vBudRatio\times$ range.

\section{Reproducibility summary}
\label{app:repro}
\textbf{Split:} One draw per seed. All methods are evaluated on the identical held-out set, with component permutations fitted exclusively on the training split.\\
\textbf{Optimizer:} Adam, $\mathrm{lr}=\const{0.08}$, \const{250} iterations, retaining the lowest empirical kurtosis without early stopping. EM uses \const{25}--\const{40} epochs.\\
\textbf{Statistics:} Exact McNemar tests with seed-pooled discordant counts, \const{2000}-resample seed-level bootstrap intervals, and profile-likelihood intervals for censored fits.\\
\textbf{Compute:} Executed on CPU in double precision.

\section{Algorithm}
\label{app:algo}
The projection pursuit objective $\hat\kappa_T(u)$ is minimized over $u \in \mathcal{S} \cap S^{p-1}$ using multi-start projected Adam. The subspace $\mathcal{S}$ is $\Rb^p$ for the unrestricted search, $\Rsp$ for the operator-restricted search, and the span of the top $\hat{r}$ eigenvectors of the empirical covariance for the PCA-restricted search.

For each initialization, the update rule at iteration $t$ is:
\begin{align*}
    g_t &= \nabla_u \hat\kappa_T(u_t) \\
    \tilde{u}_{t+1} &= \mathrm{AdamUpdate}(u_t, g_t, \alpha) \\
    u_{t+1} &= \frac{\Pi_{\mathcal{S}} \tilde{u}_{t+1}}{\|\Pi_{\mathcal{S}} \tilde{u}_{t+1}\|_2}
\end{align*}
Gradients are computed via automatic differentiation. The learning rate is $\alpha=\const{0.08}$ with default momentum parameters $\beta_1=\const{0.9}, \beta_2=\const{0.999}$. Each restart runs for \const{250} iterations. The returned direction $\hat{u}$ is the one achieving the minimum $\hat\kappa_T$ across all $\vProtNstarRestarts{}$ restarts.

\section{Baselines}
\label{app:baselines}
Four of the methods listed in Table~\ref{tab:info} serve as baselines:
\begin{enumerate}
    \item \textbf{Latent-space $k$-means:} Applied directly to the exact latent coordinates $z$. This provides an omniscient performance ceiling, as it bypasses the forward operator entirely.
    \item \textbf{Second-moment spectral:} Computes the top eigenvector of the empirical covariance matrix of the whitened observations $\yt$. It recovers the span of the component means only when the signal covariance spike separates from the Marchenko--Pastur bulk. That spike is
\[
\varsigma^2\left(s^2+\Delta^2/4\right),
\]
so by the Baik--Ben Arous--P\'ech\'e threshold \citep{baik2004phasetransitionlargesteigenvalue} the requirement is $n\gtrsim p/\varsigma^4$. This is the same condition that sets $\varsigma_{\mathrm{PCA}}$ in Section~\ref{sec:law}.($\varsigma > \varsigma_{\mathrm{PCA}}$).
    \item \textbf{Gaussian Mixture Model (EM):} Exact EM for the latent mixture, operator held fixed, run on the whitened observations. The E-step is taken in the $r$-dimensional latent space rather than the ambient space, using the Woodbury identity, so each sweep costs $O(nr^2)$ and not $O(np^2)$; component covariances are full, not diagonal. It is initialized once from the candidate direction under test and run for at most \const{40} sweeps, stopping early when all parameters move by less than $\const{10^{-9}}$.
    \item \textbf{Exact Bayes oracle:} Computes the posterior regime probabilities using the exact generative parameters $(\pi, \mu, \bSig)$.
\end{enumerate}

\section{Extended related work}
\label{app:extended_rw}
\paragraph{Spiked covariance models.}
The threshold $\varsigma_{\mathrm{PCA}}$ is governed by the BBP phase transition \citep{baik2004phasetransitionlargesteigenvalue}. For a rank-$1$ deformation of a sample covariance matrix of aspect ratio $\gamma = p/n$, the principal eigenvector correlates with the signal direction only when the signal-to-noise ratio exceeds $\sqrt{\gamma}$. In our setting, the operator subspace restriction bypasses this transition, replacing the dependence on $p/n$ with $r/n$ or $\dsr/n$.

\paragraph{Higher-order tensors.}
Methods utilizing full fourth-order cumulant tensors \citep{anandkumar2014tensor} guarantee consistent recovery of latent components under mild non-degeneracy conditions. However, forming the full tensor requires $O(p^4)$ memory and $O(np^4)$ operations, which is intractable for the ambient dimensions evaluated here. Unlabeled projection pursuit compresses this search to $O(np)$ per iteration by optimizing a one-dimensional projection index directly on the sphere.

\section{Extended sweeps}
\label{app:sweeps}

\paragraph{Varying the angular success criterion.}
The thresholds are recomputed at success criteria $\theta\in\{\const{10}^\circ,\const{15}^\circ,\const{20}^\circ\}$ from the per-cell angles already recorded, requiring no additional simulation. The fitted exponents are $\vkExpCritTen$, $\vkExpCritFifteen$ and $\vkExpCritTwenty$, each inside the bootstrap interval $[\vkLawExpLo,\vkLawExpHi]$, while the crossing moves from $\vkCrossTen$ to $\vkCrossFifteen$ to $\vkCrossTwenty$ (Table~\ref{tab:crossing_shifts}).

\paragraph{Anisotropic latent covariance.}
The latent covariance is set to $\bSig = s^2\,\mathrm{diag}(1, a^2, \ldots, a^2)$ with $a\in\{\const{1},\const{10}\}$. As reported in Section~\ref{sec:notaconstant}, $a>1$ distorts the search-space geometry: $\varsigma_{\mathrm{PCA}}$ is invariant to four significant figures, while the threshold ratio $\varsigma_{\mathrm{KPP}}/\varsigma_{\mathrm{PCA}}$ shifts upward at $a=\const{10}$. An earlier version of this experiment measured that shift at three $(p,n)$ points, all at $r=\const{4}$, and found it constant at \const{1.10}. Since the collapse variable is $rn/p^2$, varying $n/p^2$ at fixed rank does not test constancy in $r$. On a grid of $\vkAnisoCells$ cells spanning $r\in\{\const{4},\const{16}\}$ the shift has median $\vkAnisoShiftMed$ over $[\vkAnisoShiftLo,\vkAnisoShiftHi]$, equal to $\vkAnisoShiftRFour$ at $r=\const{4}$ and $\vkAnisoShiftRSixteen$ at $r=\const{16}$; the difference in medians between ranks is $+\const{0.51}$ with a bootstrap interval $[\const{-0.14},\const{+0.73}]$, so four matched pairs per rank establish that the shift is not demonstrably constant without establishing what it depends on. Fitting the law separately on this grid gives $\vkAnisoCalExp\,[\vkAnisoCalExpLo,\vkAnisoCalExpHi]$ with $R^2=\vkAnisoCalRTwo$: the collapse survives anisotropy, but eight cells per arm do not constrain the exponent, and we quote the main grid's throughout. The thresholds in this sweep use a criterion set at \const{0.3} times the null median angle for each configuration, rather than a fixed angle or a common false-positive rate, for the reason given in Section~\ref{sec:calibration}.

\paragraph{Calibrating an angular criterion: detail.}
Over \const{40} no-signal seeds, the fifth percentile of the angle is $\vkNullIsoLo^\circ$--$\vkNullIsoHi^\circ$ at $a=\const{1}$ and $\vkNullAniLo^\circ$--$\vkNullAniHi^\circ$ at $a=\const{10}$. A criterion calibrated to a common false-positive rate is approximately $\vkNullRatio$ times stricter in the anisotropic case. The \const{15}$^\circ$ criterion used on the main grid ($a=\const{1}$, null fifth percentile $\vkNullIsoLo^\circ$--$\vkNullIsoHi^\circ$) is conservative. The $\vkNullRatio$-fold difference in calibrated criteria between two configurations of the same model prevents the use of a transportable crossing constant.

\end{document}